%% file: root.tex
\documentclass[letterpaper, 10 pt, conference]{ieeeconf}  

\IEEEoverridecommandlockouts                              

\input{packages}

\input{macros}

\title{\LARGE \bf
A Ray Intersection Algorithm for Fast Growth Distance Computation Between Convex Sets
}

\author{%
Akshay Thirugnanam and Koushil Sreenath
\thanks{%
This work was supported in part by the David and Patricia Bogy Fellowship within the Department of Mechanical Engineering at UC Berkeley.
}
\thanks{%
The authors are with the Department of Mechanical Engineering, UC Berkeley, CA 94720, USA. \tt\small\{akshay\_t, koushils\}@berkeley.edu
}%
}

\begin{document}

\maketitle
\pagestyle{plain} 
\pagenumbering{arabic}

\input{sections/abstract}

\input{sections/introduction}
\input{sections/background}
\input{sections/growth-distance-algorithm}
\input{sections/results}
\input{sections/conclusions}





\section*{ACKNOWLEDGMENT}

The authors thank Prof. Ong Chong Jin at the National University of Singapore and Jeongmin Lee at Seoul National University for the implementations of the Incremental and the DCF algorithms, respectively.


\bibliographystyle{IEEEtran}
\bibliography{IEEEabrv,references}

\end{document}

%% file: packages.tex
\usepackage{cite}

\usepackage{xurl} 
\usepackage{hyperref}
\hypersetup{
    breaklinks=true,   
}

\usepackage{bm} 

\usepackage{amsmath,amssymb,amsfonts}
\usepackage{mathtools} 
\usepackage{amsthm} 
\usepackage[bb=dsserif]{mathalpha} 
\usepackage{eqparbox,array}

\usepackage{siunitx}

\usepackage{tabularx,ragged2e}
\newcolumntype{L}{>{\raggedright\arraybackslash}X} 
\newcolumntype{C}{>{\Centering\arraybackslash}X} 
\newcolumntype{R}{>{\raggedleft\arraybackslash}X} 
\usepackage{multirow}
\usepackage{booktabs}
\usepackage{array}
\usepackage{arydshln} 

\usepackage{graphicx}
\usepackage[caption=false]{subfig} 
\usepackage{xcolor} 

\usepackage{algorithmic} 
\usepackage{algorithm}

\newcommand\LONGCOMMENT[2]{%
  \hfill$\rhd$ \begin{minipage}[t]{#1}#2\strut\end{minipage}%
}
\newcommand\LONGTEXT[2]{%
  \hfill\begin{minipage}[t]{#1}#2\strut\end{minipage}%
}

\usepackage{stfloats}

\usepackage{float}
\let\labelindent\relax
\usepackage{enumitem}

\usepackage{datetime}

\usepackage{pbalance}

\newtheorem{proposition}{Proposition}
\newtheorem{theorem}{Theorem}
\newtheorem{lemma}{Lemma}
\newtheorem{remark}{Remark}
\newtheorem{definition}{Definition}

\newtheorem{problem}{Problem}

%% file: macros.tex
\newcommand{\real}{\mathbb{R}}

\newcommand{\realnn}{\mathbb{R}_\texttt{+}}
\newcommand{\realp}{\mathbb{R}_\texttt{++}}

\DeclareMathOperator{\setcl}{cl}

\DeclareMathOperator{\setint}{int}
\DeclareMathOperator{\setbd}{bd}
\DeclareMathOperator{\dom}{dom}
\DeclareMathOperator{\conv}{conv}
\DeclareMathOperator{\cone}{cone}
\DeclareMathOperator*{\argmin}{argmin} 
\DeclareMathOperator*{\argmax}{argmax} 



%% file: sections/abstract.tex
\begin{abstract}
\label{abstract}
In this paper, we discuss an efficient algorithm for computing the growth distance between two compact convex sets with representable support functions.
The growth distance between two sets is the minimum scaling factor such that the sets intersect when scaled about some center points.
Unlike the minimum distance between sets, the growth distance provides a unified measure for set intersection and separation.
We first reduce the growth distance problem to an equivalent ray intersection problem on the Minkowski difference set.
Then, we propose an algorithm to solve the ray intersection problem by iteratively constructing inner and outer polyhedral approximations of the Minkowski difference set.
We show that our algorithm satisfies several key properties, such as primal and dual feasibility and monotone convergence.
We provide extensive benchmark results for our algorithm and show that our open-source implementation achieves state-of-the-art performance across a wide variety of convex sets.
Finally, we demonstrate robotics applications of our algorithm in motion planning and rigid-body simulation.

\end{abstract}

%% file: sections/introduction.tex
\section{Introduction}
\label{sec:introduction}

Collision detection and distance computation between sets are essential problems with applications in collision avoidance for robot safety~\cite{gilbert2003distance,stasse2008real,schulman2014motion,mirabel2016hpp}, computer graphics~\cite{van2003collision}, and physics simulators~\cite{todorov2012mujoco}.
Collision detection in real-world applications is a challenging problem because the number of possible collisions scales quadratically with the number of sets, and the sets can be nonconvex.
Many existing collision detection pipelines speed up computation by eliminating pairs of sets that are far apart in the \textit{broadphase} step~\cite{ericson2004real} and by hierarchically decomposing nonconvex sets into convex ones~\cite{wei2022coacd}.
The final step, called the \textit{narrowphase}, involves distance computation between convex sets, which is a convex optimization problem.

The Gilbert, Johnson, and Keerthi (GJK) algorithm~\cite{gilbert1987fast} is a popular algorithm for computing the minimum distance between convex sets using their support functions.
Many improvements to the GJK algorithm have been discussed in the literature, including the enhanced-GJK algorithm~\cite{cameron1997enhancing,gilbert2002fast}, robust distance subproblem computation~\cite{montanari2017improving}, and Nesterov and Polyak accelerated GJK~\cite{montaut2024gjk}. 
A drawback of the minimum distance function for convex sets is that, when the sets intersect, it provides no information about the extent of penetration.
Algorithms such as the Expanding Polytope Algorithm (EPA)~\cite{van2001proximity} compute the penetration depth between intersecting convex sets, but are often slower because they solve a nonconvex optimization problem.
By combining the GJK algorithm and EPA, the signed distance between convex sets can be computed as the difference between the minimum distance and the penetration depth.

The growth distance function, introduced in \cite{ong1996growth}, provides an alternative measure of convex set separation and intersection.
The growth distance between two sets is defined as the minimum scaling factor $\alpha^*$ such that the sets intersect when scaled about some given center points.
A key property of the growth distance function is that it provides a unified measure for convex set intersection ($\alpha^* \leq 1$) and separation ($\alpha^* > 1$).
Furthermore, the growth distance between convex sets can be computed as the solution to a convex optimization problem (see Sec.~\ref{sec:background}), unlike the penetration depth, which requires a nonconvex optimization problem.
Thus, for robotics applications where general distance measures between convex sets can be used, such as motion planning and trajectory optimization, computing the growth distance can be more advantageous than the signed distance.
In this paper, we propose an algorithm similar to the GJK algorithm to compute the growth distance between convex sets.

\subsection{Related Works}
\label{subsec:related-works}

\begin{table*}[!t]
    \footnotesize
    \caption{Comparison of Growth Distance Methods}
    \label{tab:comparison-growth-distance-methods}
    \begin{tabularx}{\textwidth}{>{\raggedright\arraybackslash}p{2.3cm}CCCCC}
    \toprule
    Property & Internal Expanding procedure (\emph{IE}), \cite{zheng2010fast} & Differentiable Contact Features (\emph{DCF}), \cite{lee2023uncertain} & Incremental algorithm (\emph{Inc}), \cite{ong2000fast} & Differentiable Collisions (\emph{DCol}), \cite{tracy2023differentiable} & \textbf{Our method} \\
    \midrule
    Types of convex sets & Convex primitives\footref{footnote:convex-primitives}, vertex meshes, DSFs & Differentiable Support Functions (DSFs) & Facet meshes & Convex primitives & Convex primitives, vertex meshes, DSFs \\
    \addlinespace[5pt]
    Algorithm outputs & Primal and dual optimal solutions & Primal and dual optimal solutions & Primal and dual optimal solutions & Primal optimal solution & Primal and dual optimal solutions \\
    \addlinespace[5pt]
    Solution guarantees & Primal and dual feasibility & Dual feasibility & Primal feasibility, compl. slackness & Dual feasibility & Primal and dual feasibility \\
    \addlinespace[5pt]
    Termination criterion & Primal-dual gap & Dual optimality error (gradient norm) & Dual infeasibility error & KKT error & Primal-dual gap \\
    \addlinespace[5pt]
    Warm start support & No & No & Yes & No & Yes \\
    \bottomrule
\end{tabularx}
\end{table*}

The growth distance problem for compact convex sets was introduced in \cite{ong1996growth}, which used a Linear Program (LP) solver to compute the growth distance between polytopes.
Subsequent works accelerated the LP solver by considering adjacency information between the facets of the polytopes~\cite{hong2000fast,ong2000fast}.
The Incremental (\emph{Inc}) algorithm was also proposed in \cite{ong2000fast} to warm-start the computation of growth distance for polytopes.

The growth distance problem between convex sets is known to be equivalent to a ray intersection problem for the Minkowski difference of the two convex sets, see \cite{zheng2013ray}.
Subsequently, ray intersection algorithms have also been used to compute the growth distance between convex sets.
A GJK-based ray intersection algorithm was proposed in \cite{van2004ray}, based on which \cite{zheng2009numerical} used a conic projection subproblem to solve the ray intersection problem.
The Internal Expanding (\emph{IE}) algorithm was discussed in \cite{zheng2010fast}, and later combined with the projection method in \cite{zheng2009numerical} to form the hybrid \emph{ZC-IE} algorithm in \cite{zheng2013ray}.

Previous works have also considered optimization-based methods for computing the growth distance.
The \emph{DCol} algorithm~\cite{tracy2023differentiable} uses a conic solver to compute the growth distance between small convex primitives.
Likewise, the Differentiable Contact Features (\emph{DCF}) method~\cite{lee2023uncertain} computes the growth distance between Differentiable Support Function (DSF) sets, which are convex sets with twice continuously differentiable support functions, using a hybrid \emph{IE}-based Newton solver.

\subsection{Contributions}
\label{subsec:contributions}

In this paper, we address the growth distance problem by solving the equivalent ray intersection problem.
The contributions of our paper are as follows:

\noindent (i) We propose an LP-based inner-outer polyhedral approximation method for solving the ray intersection problem, in contrast to the \emph{IE} algorithm, which maintains and updates a list of facets.
We provide guarantees on primal and dual feasibility, as well as on the monotonicity of the primal-dual gap for our proposed growth distance algorithm.
Additionally, we describe extensions to our algorithm that enable warm-starting and collision detection.

\noindent (ii) We provide thorough benchmarks comparing our algorithm to existing open-source methods and optimized implementations of closed-source methods. 
We show that our algorithm outperforms all existing methods for 2D and 3D convex primitives and mesh sets (except the \emph{DCF} method for DSF sets), achieving computation times on the order of one microsecond.
Consequently, our algorithm allows speeding up robotic applications that rely on distance computation and collision detection.
We note that, to limit the scope of this paper, we do not compare the \emph{IE} algorithm to our algorithm for dimensions greater than three.
A comparison of the existing methods for growth distance computation is shown in Tab.~\ref{tab:comparison-growth-distance-methods}.

\noindent (iii) We provide an open-source C\texttt{++} implementation of our algorithm with warm start and collision detection functionality.
We demonstrate the applicability of our implementation in two robotics applications: motion planning and rigid-body simulation.
Finally, we highlight the advantages and disadvantages of the growth distance in comparison to the signed distance.

\subsection{Notations}
\label{subsec:notation}

$\real$, $\realnn$, and $\realp$ are the sets of real, nonnegative real, and positive real numbers, respectively.
$\setint{\mathcal{C}}$, $\setcl{\mathcal{C}}$, and $\setbd{\mathcal{C}}$ are the interior, closure, and boundary of the set $\mathcal{C}$, respectively.
$\dom{f}$ is the domain of the function $f$.
Throughout the paper, we will work with the Euclidean vector space $\real^n$. 
$\mathbb{1}_n \in \real^n$ and $\mathbb{0}_n \in \real^n$ are the vectors with all components as one and zero, respectively.
Dimensions are omitted when they can be contextually inferred.
$\langle \cdot, \cdot \rangle$ denotes an inner product on $\real^n$, and $\lVert \cdot \rVert$ is the corresponding norm.
For $x \in \real^n$, $B_r(x) = \{y: \lVert y-x\rVert \leq r\}$, and $S^{n-1} = \setbd(B_1(\mathbb{0}_n))$.
$GL(n)$, $SO(n)$, and $SE(n)$ are the general linear, special orthogonal, and special Euclidean groups, respectively.
$\conv{\mathcal{S}}$ and $\cone{\mathcal{S}}$ represent the convex hull and (convex) conic hull of $\mathcal{S} \subset \real^n$.

%% file: sections/background.tex
\section{Background}
\label{sec:background}

In this section, we define proper convex sets, which constitute the inputs for the growth distance algorithm in Sec.~\ref{sec:growth-distance-algorithm}, the support function of a convex set, and some algebraic operations on convex sets.
We also state the growth distance problem and its equivalence to a ray intersection problem, which will be the starting point for the growth distance algorithm in Sec.~\ref{sec:growth-distance-algorithm}.

\subsection{Convex Set Definitions}
\label{subsec:convex-set-definitions}

In this paper, we will primarily focus on the growth distance between proper convex sets, which are compact convex sets with nonzero volume and comprise a large class of objects considered in robotics applications.

\begin{definition} (Proper convex set)
\label{def:proper-convex-set}
A convex set $\mathcal{C} \subset \real^l$ is called a proper convex (PC) set if it is compact and solid, i.e., $\mathcal{C}$ is closed, bounded, and has a nonempty interior.
\end{definition}

A proper convex (PC) set $\mathcal{C}$ is also a \textit{regular set} (i.e., $\setcl(\setint{\mathcal{C}}) = \mathcal{C}$), which is a common topological assumption used in path planning problems~\cite{rodriguez2012path}.
The significance of PC sets for the growth distance problem will be elaborated upon in Sec.~\ref{subsec:growth-distance-between-convex-sets}.
Next, we define the inradius of a compact convex set $\mathcal{C}$ at a point $p \in \mathcal{C}$.

\begin{definition} (Inradius)
\label{def:inradius}
Let $\mathcal{C} \neq \emptyset$ be a compact convex set and $p \in \mathcal{C}$.
Then, the inradius $r(\mathcal{C}, p)$ of $\mathcal{C}$ at $p$ is given by
\begin{equation}
\label{eq:inradius}
r(\mathcal{C}, p) := \max \{r \geq 0: B_r(p) \subset \mathcal{C}\}.
\end{equation}
\end{definition}
Note that for a compact convex set $\mathcal{C} \ni p$, $r(\mathcal{C}, p) \geq 0$, and if $\mathcal{C}$ is a PC set and $p \in \setint{\mathcal{C}}$, then $r(\mathcal{C}, p) > 0$.

\subsection{Support Function of a Convex Set}
\label{subsec:support-function-of-a-convex-set}

For a nonempty compact convex set $\mathcal{C} \subset \real^l$, the \textit{support function} $s_v[\mathcal{C}]: \real^l \rightarrow \real$ and \textit{support patch map} $S_p[\mathcal{C}]: \real^l \rightrightarrows \real^l$ are defined as (see \cite[Sec.~13]{rockafellar1997convex})
\begin{subequations}
\label{eq:support-function-support-patch-map}
\begin{align}
s_v[\mathcal{C}](\lambda) & = \max_{z \in \mathcal{C}} \ \langle \lambda, z\rangle, \label{subeq:support-function} \\
S_p[\mathcal{C}](\lambda) & = \argmax_{z \in \mathcal{C}} \ \langle \lambda, z\rangle. \label{subeq:support-patch-map}
\end{align}
\end{subequations}
The subscripts `$v$' and `$p$' in $s_v[\mathcal{C}]$ and $S_p[\mathcal{C}]$ stand for `value' and `patch'.
The vector $\lambda \in \real^l$ is the direction along which the support function is computed.
Since $\mathcal{C}$ is compact, $\dom(s_v[\mathcal{C}]) = \dom(S_p[\mathcal{C}]) = \real^l$, $s_v[\mathcal{C}]$ is positively homogeneous~\cite[Cor.~13.2.2]{rockafellar1997convex}, and $S_p[\mathcal{C}]$ is pointwise compact.
We can define a \textit{support point function} $s_p[\mathcal{C}]: \real^l \rightarrow \real^l$ (here `$p$' stands for `point'), using a selection function that selects an element from the set $S_p[\mathcal{C}]$ for each $\lambda$, as follows:
\begin{equation}
\label{eq:support-point-function}
s_p[\mathcal{C}](\lambda) \in S_p[\mathcal{C}](\lambda).
\end{equation}
Note, from \eqref{eq:support-function-support-patch-map}, that for all $\lambda \in \real^l$,
\begin{equation*}
    s_v[\mathcal{C}](\lambda) = \langle \lambda, s_p[\mathcal{C}](\lambda)\rangle.
\end{equation*}
We do not make any assumptions about the selection, and our algorithm remains valid even if the support points are randomly or nondeterministically selected.

The compact convex set $\mathcal{C} \subset \real^l$ is completely defined by its support function $s_v[\mathcal{C}]$ as (see \cite[Sec.~13]{rockafellar1997convex})
\begin{equation}
    \mathcal{C} = \{z: \langle \lambda, z\rangle \leq s_v[\mathcal{C}](\lambda), \ \forall \lambda \in \real^l\}.
\end{equation}

For example, if the convex set $\mathcal{C}$ is a polytope defined as $\conv\{z_i: i \in \{1, ..., N_v\}\}$, the support and support point functions can be computed as
\begin{align*}
    s_v[\mathcal{C}](\lambda) & = \max_{i \in \{1, ..., N_v\}} \ \langle \lambda, z_i\rangle, \\
    s_p[\mathcal{C}](\lambda) & = z_{i^*}, \quad i^* \in \argmax_{i \in \{1, ..., N_v\}} \ \langle \lambda, z_i\rangle.
\end{align*}

We will use the equivalent support function representation of compact convex sets for the growth distance algorithm in Sec.~\ref{sec:growth-distance-algorithm}, since the support function for many commonly used shapes in robotics, such as convex primitives\footnote{
Convex primitive sets include, among others, `small' polytopes (with $\lessapprox 50$ vertices), cones, frustums, capsules, spheres, ellipsoids, and cylinders.
We distinguish between small polytopes and large polytopes (termed meshes) as the support function computation for large polytopes can benefit from the hill-climbing algorithm~\cite{van2003collision} (see Sec.~\ref{sec:results}).
\label{footnote:convex-primitives}
}, meshes, and point clouds can be easily computed.

\subsection{Affine Transformation and Minkowski Sum}
\label{subsec:affine-transformation-and-minkowski-sum}

In this subsection, we describe two operations on convex sets and the corresponding support function transformations.

\subsubsection{Affine Transformation}
\label{subsubsec:affine-transformation}

Let $g = (b, A) \in \real^l \times GL(l)$ correspond to the \textit{(invertible) affine transformation} $T_g: z \mapsto Az + b$. 
The action of $g$ on a set $\mathcal{C} \subset \real^l$ results in the set $T_g(\mathcal{C})$ given by
\begin{subequations}
\label{eq:affine-transformation}
\begin{align}
T_g(\mathcal{C})  & \ = \{Az + b: z \in \mathcal{C}\}, \\
T_g^{-1}(\mathcal{C}) & \ = \{A^{-1}(z - b): z \in \mathcal{C}\}.
\end{align}
\end{subequations}
For a nonempty compact convex set $\mathcal{C} \subset \real^l$, the support function and support patch map of $T_g(\mathcal{C})$ are given by \cite[Cor.~16.3.1]{rockafellar1997convex} as
\begin{subequations}
\label{eq:affine-transformation-support-function}
\begin{align}
s_v[T_g(\mathcal{C})](\lambda) & = s_v[\mathcal{C}](A^\top \lambda) + \langle \lambda, b\rangle, \label{subeq:affine-transformation-support-function-value} \\
S_p[T_g(\mathcal{C})](\lambda) & = T_g(S_p[\mathcal{C}](A^\top \lambda)). \label{subeq:affine-transformation-support-point-map}
\end{align}
\end{subequations}
Notable examples of affine transformations include:
\begin{align*}
z & \mapsto Rz + p, \tag{\textit{Rigid body transformation}} \\
z & \mapsto sz, \tag{\textit{Scaling transformation}} \\
z & \mapsto (I - 2aa^\top)z, \tag{\textit{Reflection transformation}}
\end{align*}
where $(p, R) \in SE(l)$ is a rigid body transform, $s \in \realp$ is a scaling factor, and $a \in S^{l-1}$ is a reflection axis.

\subsubsection{Minkowski Sum}
\label{subsubsec:minkowski-sum}

The \textit{Minkowski sum} of two nonempty compact convex sets $\mathcal{C}^1$ and $\mathcal{C}^2$ is defined as
\begin{equation}
\label{eq:minkowski-sum}
\mathcal{C}^1 + \mathcal{C}^2 = \{z^1 + z^2: z^1 \in \mathcal{C}^1, z^2 \in \mathcal{C}^2\},
\end{equation}
with the support function and support patch map given by \cite[Sec.~13]{rockafellar1997convex} as
\begin{subequations}
\label{eq:minkowski-sum-support-function}
\begin{align}
s_v[\mathcal{C}^1 + \mathcal{C}^2](\lambda) & = s_v[\mathcal{C}^1](\lambda) + s_v[\mathcal{C}^2](\lambda), \label{subeq:minkowski-sum-support-function-value} \\
S_p[\mathcal{C}^1 + \mathcal{C}^2](\lambda) & = S_p[\mathcal{C}^1](\lambda) + S_p[\mathcal{C}^2](\lambda). \label{subeq:minkowski-sum-support-point-map}
\end{align}
\end{subequations}
We will also define the Minkowski difference between $\mathcal{C}^1$ and $\mathcal{C}^2$ as $\mathcal{C}^1 - \mathcal{C}^2 := \{z^1 - z^2: z^1 \in \mathcal{C}^1, z^2 \in \mathcal{C}^2\}$.


Finally, we show that the set of PC sets is closed under affine transformations and the Minkowski sum.

\begin{lemma}
\label{lem:properties-affine-transformation-minkowski-sum}
Let $\mathcal{C}^1$ be a PC set, $\mathcal{C}^2 \neq \emptyset$ be a compact convex set, and $g \in \real^l \times GL(l)$ be an affine transformation.
Then, $T_g(\mathcal{C}^1)$ and $\mathcal{C}^1 + \mathcal{C}^2$ are PC sets.
\end{lemma}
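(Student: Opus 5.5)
We check the three defining properties of a PC set from Definition~\ref{def:proper-convex-set}: convexity, compactness, and nonempty interior. We handle each of $T_g(\mathcal{C}^1)$ and $\mathcal{C}^1 + \mathcal{C}^2$ in turn. Both arguments rest on a few standard facts. Affine maps preserve convexity. Continuous images of compact sets are compact. Invertible affine maps are homeomorphisms of $\real^l$.

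\emph{Affine transformation.} Take $z_1, z_2 \in \mathcal{C}^1$ and $\theta \in [0,1]$. Then $\theta(Az_1+b) + (1-\theta)(Az_2+b) = A(\theta z_1 + (1-\theta) z_2) + b$, so $T_g(\mathcal{C}^1)$ is convex. The map $T_g$ is continuous, so $T_g(\mathcal{C}^1)$ is compact as the continuous image of a compact set.

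For solidity, we use that $A \in GL(l)$. This makes $T_g$ a homeomorphism of $\real^l$ with continuous inverse $z \mapsto A^{-1}(z-b)$. Hence $T_g$ maps open sets to open sets. Pick $p \in \setint \mathcal{C}^1$. By the remark after Definition~\ref{def:inradius}, there is $r>0$ with $B_r(p) \subset \mathcal{C}^1$. The image $T_g(\setint B_r(p))$ is then an open set containing $Ap+b$ and contained in $T_g(\mathcal{C}^1)$. So $\setint T_g(\mathcal{C}^1) \neq \emptyset$.

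\emph{Minkowski sum.} Convexity follows by splitting a convex combination of $z^1_1+z^2_1$ and $z^1_2+z^2_2$ componentwise. The first component stays in $\mathcal{C}^1$ and the second stays in $\mathcal{C}^2$.

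For compactness, note that $\mathcal{C}^1 + \mathcal{C}^2$ is the image of the compact set $\mathcal{C}^1 \times \mathcal{C}^2 \subset \real^l \times \real^l$ under the continuous map $(z^1,z^2) \mapsto z^1+z^2$. It is therefore compact, and in particular closed and bounded.

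For solidity, we use that $\mathcal{C}^2 \neq \emptyset$ to fix some $q \in \mathcal{C}^2$. Again take $p \in \setint \mathcal{C}^1$ with $B_r(p) \subset \mathcal{C}^1$ for some $r > 0$. Then $B_r(p+q) = B_r(p) + q \subset \mathcal{C}^1 + \mathcal{C}^2$, so $p+q$ is an interior point.

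No step is a real obstacle. The only points needing care are the following. Closedness of the Minkowski sum needs compactness of both summands, not merely closedness; this is exactly why compactness of $\mathcal{C}^2$ is assumed. Invertibility of $A$ is essential for solidity of $T_g(\mathcal{C}^1)$, since a singular $A$ would collapse the set onto a lower-dimensional subspace. Finally, nonemptiness of $\mathcal{C}^2$ is needed so that the sum is nonempty and the translate argument applies.
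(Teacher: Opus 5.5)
Your proof is correct and follows the paper's argument. The paper uses invertibility of $T_g$ to preserve compactness, openness and convexity, treats $\mathcal{C}^1+\mathcal{C}^2$ as the continuous linear image of the compact convex set $\mathcal{C}^1 \times \mathcal{C}^2$, and gets solidity by translating interior points of $\mathcal{C}^1$ by points of $\mathcal{C}^2$, written as $\setint(\mathcal{C}^1+\mathcal{C}^2) \supset \setint(\mathcal{C}^1) + \mathcal{C}^2$. One small correction to your closing remark: closedness of a Minkowski sum only needs one summand compact and the other closed, so compactness of $\mathcal{C}^2$ is needed for boundedness of the sum rather than for its closedness.
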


\begin{proof}
Since $T_g$ is a (continuous) invertible affine transformation, it preserves compact sets, open sets, and convex sets.
Thus, the transformed set $T_g(\mathcal{C}^1)$ is also a PC set.

The set $\mathcal{C}^1 \times \mathcal{C}^2$ is compact and convex, and so its image under the (continuous) linear map $A: (z^1, z^2) \mapsto z^1 + z^2$, given by $\mathcal{C}^1 + \mathcal{C}^2$, is compact and convex.
Additionally, since $\setint(\mathcal{C}^1 + \mathcal{C}^2) \supset \setint(\mathcal{C}^1) + \mathcal{C}^2 \neq \emptyset$, the Minkowski sum $\mathcal{C}^1 + \mathcal{C}^2$ is a PC set.
\end{proof}

\subsection{Growth Distance between Convex Sets}
\label{subsec:growth-distance-between-convex-sets}

In this subsection, we state the growth distance problem and discuss its equivalence to a ray intersection problem.
For the discussion in the rest of the paper, all sets are assumed to be subsets of $\real^l$, unless specified otherwise.

\begin{definition} (Growth distance)
\label{def:growth-distance}
Let $\mathcal{C}^1$ be a PC set and $\mathcal{C}^2 \neq \emptyset$ be a compact convex set with $p^1 \in \setint(\mathcal{C}^1)$ and $p^2 \in \mathcal{C}^2$.
Then, the growth distance $\alpha^*$ between $\mathcal{C}^1$ and $\mathcal{C}^2$ (w.r.t. $p^1$ and $p^2$) is the optimal value of the following problem:
\begin{subequations}
\label{eq:growth-distance-definition}
\begin{align}
\alpha^* = \ \min_{\substack{\alpha \geq 0, z^1, z^2}} \ & \alpha, \\
\text{s.t.} \
& z^1 \in \mathcal{C}^1, \ z^2 \in \mathcal{C}^2, \\
& \alpha (z^1 - p^1) + p^1 = \alpha (z^2 - p^2) + p^2. \label{subeq:growth-distance-definition-intersection}
\end{align}
\end{subequations}
\end{definition}

We call the point $p^1$ about which the set $\mathcal{C}^1$ is scaled as the \textit{center point} of $\mathcal{C}^1$.
For $i \in \{1, 2\}$, when $\mathcal{C}^i$ is scaled by the factor $\alpha$ about the center point $p^i$, $z^i \in \mathcal{C}^i$ is mapped to $\alpha (z^i - p^i) + p^i$.
Thus, the constraint \eqref{subeq:growth-distance-definition-intersection} requires that $\mathcal{C}^1$ and $\mathcal{C}^2$ have a nonempty intersection when scaled by the factor $\alpha$ about their center points.
An example of the computation of the growth distance for two PC sets is shown in Fig.~\ref{subfig:growth-distance-problem}.
Under the assumptions of Def.~\ref{def:growth-distance}, the growth distance satisfies $\alpha^* > 0$ and provides a unified measure for object separation and collision, corresponding to $\alpha^* > 1$ and $\alpha^* \leq 1$, respectively.

\begin{remark} (Existence of optimal solutions)
\label{rem:existence-optimal-solutions}
In Def.~\ref{def:growth-distance}, if $\mathcal{C}^1$ and $\mathcal{C}^2$ are general nonempty closed convex sets, then \eqref{eq:growth-distance-definition} can be infeasible.
Even when the infimum of \eqref{eq:growth-distance-definition} exists, it is possible that an optimal solution to \eqref{eq:growth-distance-definition} does not exist, since the feasible set may be noncompact.
Under the assumptions of Def.~\ref{def:growth-distance}, \eqref{eq:growth-distance-definition} has a continuous cost function and a compact feasible set that is nonempty (since $\mathcal{C}^1$ is a PC set); thus, an optimal solution to \eqref{eq:growth-distance-definition} always exists.
We note that the assumptions of Def.~\ref{def:growth-distance} only require both convex sets to be compact and at least one convex set to be solid.
Thus, in most robotics applications, with the appropriate modeling of convex geometries and selection of center points, the growth distance problem \eqref{eq:growth-distance-definition} is well-defined.
\end{remark}

The growth distance problem \eqref{eq:growth-distance-definition} is equivalent to a ray intersection problem for the Minkowski difference set $\mathcal{C}^1 - \mathcal{C}^2$, as stated in the following result (the proof follows directly from the transformation stated in the result).

\begin{proposition} (Ray intersection problem)~\cite[Sec.~V.A]{zheng2013ray}
\label{prop:ray-intersection-problem}
Let the assumptions of Def~\ref{def:growth-distance} hold and let $p^2 \neq p^1$.
Consider the following convex optimization problem:
\begin{subequations}
\label{eq:ray-intersection-definition}
\begin{align}
\beta^* = \ \max_{\beta, z} \ & \beta, \label{subeq:ray-intersection-definition-value} \\
\text{s.t.} \
& z \in \mathcal{C}, \quad z = \beta p, \label{subeq:ray-intersection-definition-constraints}
\end{align}
\end{subequations}
where $p := p^2 - p^1$, and $\mathcal{C} := \mathcal{C}^1 - \mathcal{C}^2 + \{p\}$.
Then, $\beta^* > 0$ and there is a unique optimal solution $(\beta^*, z^*)$ to \eqref{eq:ray-intersection-definition}.
Further, for $\beta > 0$, $(\beta, z)$ is feasible (optimal) for \eqref{eq:ray-intersection-definition} if and only if $(1/\beta, z^1, z^2)$ is feasible (optimal) for \eqref{eq:growth-distance-definition} for some $z^{1} \in \mathcal{C}^1$ and $z^{2} \in \mathcal{C}^2$, with $z = z^{1} - z^{2} + p$.
\end{proposition}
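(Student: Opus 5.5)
The argument is a direct change of variables between the two problems. I would first rewrite the intersection constraint \eqref{subeq:growth-distance-definition-intersection}. Expanding gives $\alpha(z^1 - z^2) = (1-\alpha)(p^2 - p^1) = (1-\alpha)p$. Hence $\alpha(z^1 - z^2 + p) = p$.

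Setting $z := z^1 - z^2 + p \in \mathcal{C}$, the constraint becomes $\alpha z = p$. This cannot hold with $\alpha = 0$, since $p \neq 0$. So every feasible point of \eqref{eq:growth-distance-definition} has $\alpha > 0$, and we may put $\beta := 1/\alpha$ to get $z = \beta p$ with $z \in \mathcal{C}$. Conversely, suppose $(\beta, z)$ is feasible for \eqref{eq:ray-intersection-definition} with $\beta > 0$. Write $z = z^1 - z^2 + p$ with $z^i \in \mathcal{C}^i$, which is possible by the definition of $\mathcal{C}$. Reversing the algebra shows $(1/\beta, z^1, z^2)$ is feasible for \eqref{eq:growth-distance-definition}. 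Since $\beta \mapsto 1/\beta$ is strictly decreasing on $\realp$, maximizing $\beta$ corresponds to minimizing $\alpha$, which gives the optimality part of the correspondence.

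Next I would establish $\beta^* > 0$ and the existence of a maximizer. Since $p^1 \in \setint(\mathcal{C}^1)$ and $p^2 \in \mathcal{C}^2$, we have $0 = p^1 - p^2 + p \in \setint(\mathcal{C}^1) - p^2 + p \subset \setint\mathcal{C}$. The set $\mathcal{C}$ is a PC set by Lemma~\ref{lem:properties-affine-transformation-minkowski-sum}, using the Minkowski sum with $-\mathcal{C}^2 + \{p\}$. So some ball $B_r(0)$ lies in $\mathcal{C}$, and $\beta = r/\lVert p\rVert$ is feasible. The feasible set $\{\beta : \beta p \in \mathcal{C}\}$ is compact because $\mathcal{C}$ is compact and $p \neq 0$. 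It is also nonempty, so a maximizer exists and $\beta^* \geq r/\lVert p\rVert > 0$. Alternatively, existence follows from Remark~\ref{rem:existence-optimal-solutions} through the correspondence, using $\beta^* = 1/\alpha^*$.

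Uniqueness is immediate: $\beta^*$ is the unique optimal value, and $z^*$ is then forced to equal $\beta^* p$. The feasible pairs with $\beta \le 0$ need a little care, since the correspondence only covers $\beta > 0$. They cannot be optimal because $\beta^* > 0$, so restricting the equivalence to $\beta > 0$ loses nothing.

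The only real obstacle is this interplay between the positivity restriction and the sign of $\alpha$. I would resolve it as above: $\alpha = 0$ is infeasible because $p \neq 0$, and all feasible $\alpha$ are strictly positive.
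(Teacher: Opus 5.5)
Your proposal is correct and takes essentially the paper's route. The paper only asserts that the result follows from the change of variables $\alpha = 1/\beta$, $z = z^1 - z^2 + p$; you carry out exactly that substitution. You also supply the details the paper leaves implicit: $\alpha = 0$ is infeasible because $p \neq 0$, and $\beta^* > 0$ with a unique maximizer because $0 \in \setint\mathcal{C}$ and $\mathcal{C}$ is compact.
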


For the growth distance example in Fig.~\ref{subfig:growth-distance-problem}, Fig.~\ref{subfig:ray-intersection-problem} shows the equivalent ray intersection problem.
Under the assumptions of Def.~\ref{def:growth-distance}, $\mathcal{C}$ is a PC set, and we can write the support function and support patch map of $\mathcal{C}$ using \eqref{eq:affine-transformation} and \eqref{eq:minkowski-sum-support-function} as
\begin{subequations}
\label{eq:minkowski-difference-support-function}
\begin{align}
s_v[\mathcal{C}](\lambda) & = s_v[\mathcal{C}^1](\lambda) + s_v[\mathcal{C}^2](-\lambda) + \langle \lambda, p\rangle, \\
S_p[\mathcal{C}](\lambda) & = S_p[\mathcal{C}^1](\lambda) - S_p[\mathcal{C}^2](-\lambda) + \{p\}.
\end{align}
\end{subequations}

Next, we define $\epsilon$-optimal solutions for \eqref{eq:growth-distance-definition} and \eqref{eq:ray-intersection-definition}, which comprise the outputs of the growth distance algorithm.

\begin{definition} ($\epsilon$-optimal solution)
\label{def:epsilon-optimal-solution}
Under the assumptions of Def.~\ref{def:growth-distance}, a feasible solution $(\alpha, z^1, z^2)$ for \eqref{eq:growth-distance-definition} is $\epsilon$-optimal (for some $\epsilon > 0$) if $\alpha / \alpha^* - 1 \leq \epsilon$.
Likewise, for \eqref{eq:ray-intersection-definition}, a feasible solution $(\beta, z)$ with $\beta > 0$ is $\epsilon$-optimal if $\beta^* / \beta -1 \leq \epsilon$.
\end{definition}

Note, from \eqref{eq:ray-intersection-definition}, that the ratio $\beta^* / \beta$ is independent of the distance between the center points $\lVert p\rVert$.
Thus, the $\epsilon$-optimal solution definition in Def.~\ref{def:epsilon-optimal-solution} allows for a larger absolute error $\beta^* - \beta$ when the two sets are farther apart, and is a more intrinsic metric for convergence.
We can now state the growth distance problem, which will be the focus of Sec.~\ref{sec:growth-distance-algorithm}.

\begin{problem} (Growth distance problem)
\label{problem:growth-distance}
Let $\mathcal{C}^1$ be a PC set and $\mathcal{C}^2 \neq \emptyset$ be a compact convex set with $p^1 \in \setint(\mathcal{C}^1)$ and $p^2 \in \mathcal{C}^2$.
Let the support functions $s_v[\mathcal{C}^i]$ and support point functions $s_p[\mathcal{C}^i]$ be known for $i \in \{1, 2\}$.
Then, compute an $\epsilon_{tol}$-optimal solution to \eqref{eq:growth-distance-definition} for some $\epsilon_{tol} > 0$.
\end{problem}

\begin{figure}[!t]
\centering
\subfloat[
The growth distance problem, \eqref{eq:growth-distance-definition}, for two PC sets $\mathcal{C}^1$ and $\mathcal{C}^2$ (in dark gray).
The growth distance $\alpha^*$ is the minimum scaling factor $\alpha$ such that the sets $\mathcal{C}^1$ and $\mathcal{C}^2$, when scaled about the center positions $p^1$ and $p^2$ by $\alpha$, intersect.
The scaled sets are depicted in light gray.
The two points $z^{1*} \in \mathcal{C}^1$ and $z^{2*} \in \mathcal{C}^2$ (in red) comprise the optimal solution for the growth distance problem.
]{%
\begin{minipage}{\columnwidth}
    \centering
    \includegraphics[width=0.90\columnwidth]{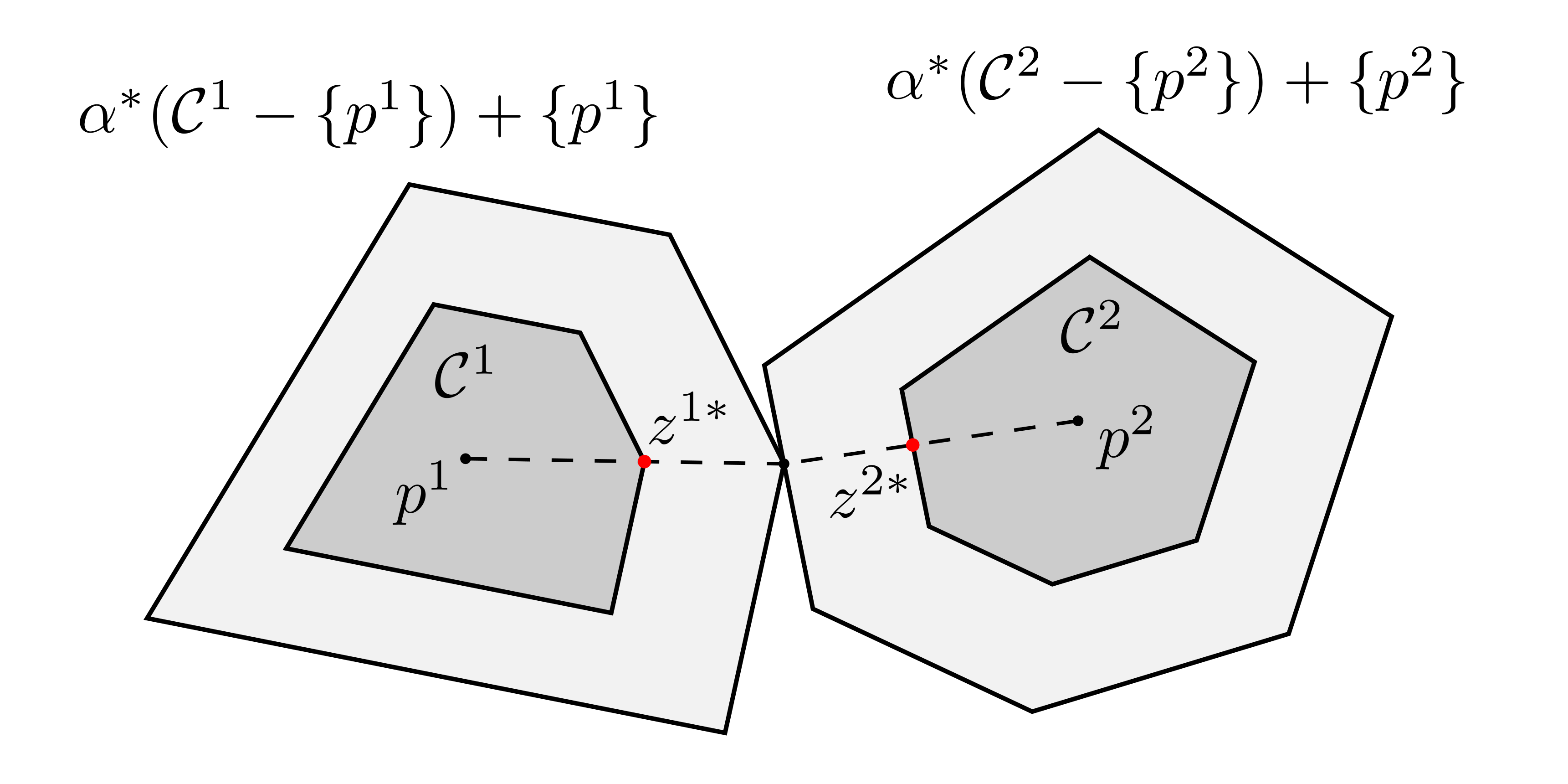}%
    \label{subfig:growth-distance-problem}
\end{minipage}%
}
\hfil
\subfloat[
The ray intersection problem, \eqref{eq:ray-intersection-definition}, corresponding to the growth distance problem depicted in Fig.~\ref{subfig:growth-distance-problem}.
The Minkowski difference set $\mathcal{C}$ (in light gray) is given by $\mathcal{C}^1 - \mathcal{C}^2 + \{p\}$, where $p = p^2 - p^1$ is the relative center position.
The ray intersection point $z^*$ (in red) is the point at which the ray $\vec{\mathbb{0}p}$ intersects with the set boundary $\setbd{\mathcal{C}}$.
The intersection point is unique because $\mathcal{C}$ is a PC set and $\mathbb{0} \in \setint{\mathcal{C}}$.
]{%
\begin{minipage}{\columnwidth}
    \centering
    \includegraphics[width=0.90\columnwidth]{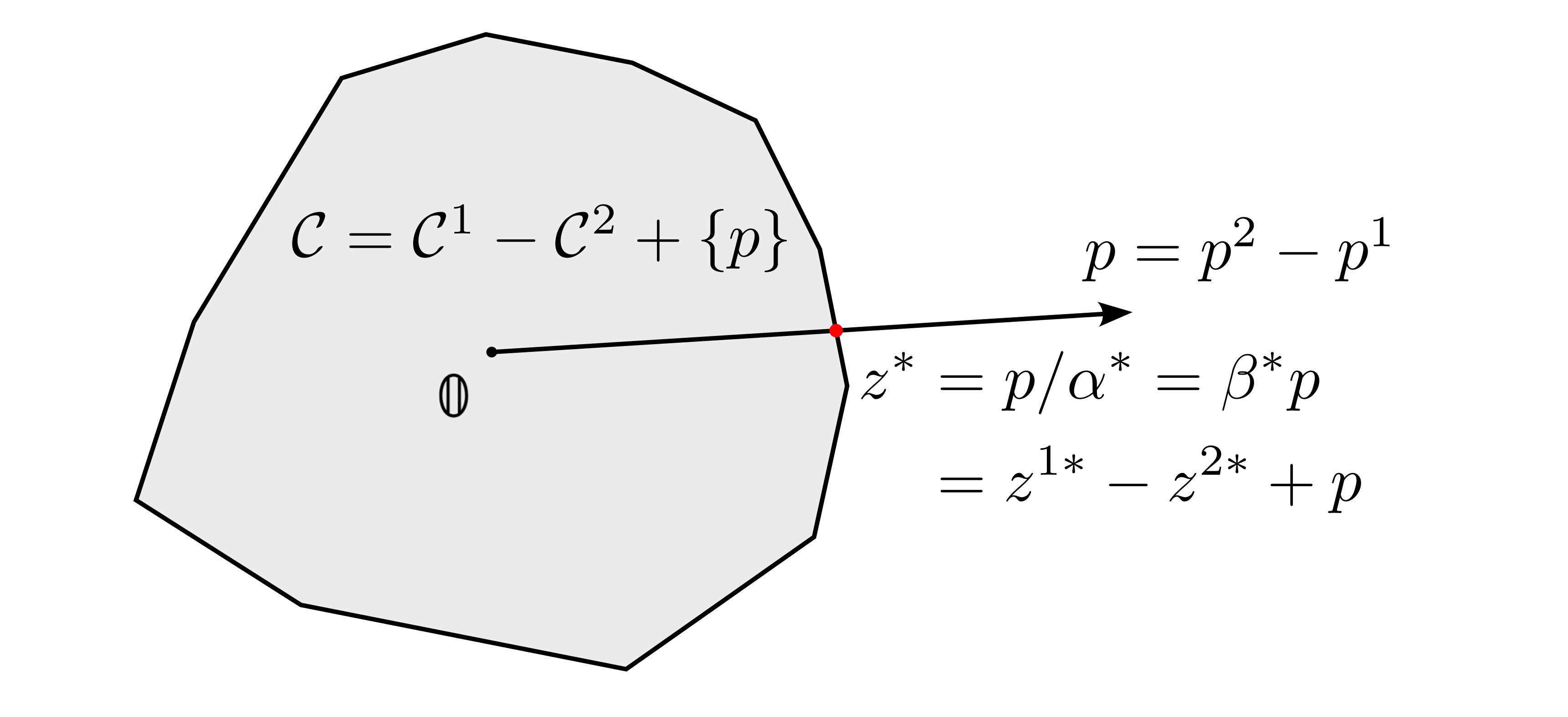}%
    \label{subfig:ray-intersection-problem}
\end{minipage}%
}
\caption{
Example of growth distance computation using \eqref{eq:growth-distance-definition} (in Fig.~\ref{subfig:growth-distance-problem}) and ray intersection computation using \eqref{eq:ray-intersection-definition} (in Fig.~\ref{subfig:ray-intersection-problem}).
By Prop.~\ref{prop:ray-intersection-problem}, the two problems are equivalent.
}
\label{fig:growth-distance-problem}
\end{figure}


In the next section, we will describe an algorithm to solve the ray intersection problem \eqref{eq:ray-intersection-definition}.
Additionally, we will extract an $\epsilon_{tol}$-optimal solution to the growth distance problem \eqref{eq:growth-distance-definition} using an $\epsilon_{tol}$-optimal solution to \eqref{eq:ray-intersection-definition}.

%% file: sections/growth-distance-algorithm.tex
\section{Growth Distance Algorithm}
\label{sec:growth-distance-algorithm}

We now discuss a growth distance algorithm to solve Problem~\ref{problem:growth-distance}, by computing an $\epsilon_{tol}$-optimal solution of \eqref{eq:ray-intersection-definition}.
We assume that the conditions of Problem~\ref{problem:growth-distance} hold, and that $p^1 \neq p^2$ (since if $p^1 = p^2$, $\alpha^* = 0$).
We start by discussing the algorithm's idea.
The main notations used in this section are tabulated in Tab.~\ref{tab:growth-distance-algorithm-notation}.

\subsection{Inner and Outer Polyhedral Approximations}
\label{subsec:inner-outer-polyhedral-approximations}

\begin{table*}[!t]
    \footnotesize
    \caption{Growth Distance Algorithm Notation}
    \label{tab:growth-distance-algorithm-notation}
    \begin{tabularx}{\textwidth}{>{\raggedright\arraybackslash}p{2.2cm}>{\raggedright\arraybackslash}p{1.15cm}L}
    \toprule
    Notation & Reference & Definition \\
    \midrule
    $\mathcal{C}$, $p$ & Prop.~\ref{prop:ray-intersection-problem} & Minkowski difference set, $\mathcal{C} = \mathcal{C}^1 - \mathcal{C}^2 + \{p^2 - p^1\}$, and center point difference, $p = p^2 - p^1$. \\
    \addlinespace[5pt]
    $s_v[\mathcal{C}]$, $s_p[\mathcal{C}]$ & \eqref{subeq:support-function}, \eqref{eq:support-point-function} & Support function and support point function, with $s_v[\mathcal{C}](\lambda) = \langle \lambda, s_p[\mathcal{C}](\lambda)\rangle$. \\
    \addlinespace[5pt]
    $\alpha^*$, $\beta^*$ & \eqref{eq:growth-distance-definition}, \eqref{eq:ray-intersection-definition} & Growth distance and ray intersection value, $\beta^* = 1/\alpha^*$. \\
    \addlinespace[5pt]
    $\lambda_k$, $v_k$, $z_k$ & Sec.~\ref{subsec:inner-outer-polyhedral-approximations} & Normal vector, support function value ($v_k = s_v[\mathcal{C}](\lambda_{k-1})$), and support point ($z_k = s_p[\mathcal{C}](\lambda_{k-1})$) at iteration $k$. \\
    \addlinespace[5pt]
    $\mathcal{P}^i_k$, $\mathcal{P}^o_k$ & \eqref{eq:polyhedral-approximation-definition} & Inner and outer polyhedral approximations of $\mathcal{C}$ at iteration $k$. \\
    \addlinespace[5pt]
    $M^i_k$, $M^o_k$ & \eqref{eq:polyhedral-approximation-definition} & Iteration index sets for the inner and outer polyhedral approximations at iteration $k$. \\
    \addlinespace[5pt]
    $\beta^l_k$, $\beta^u_k$ & \eqref{eq:ray-intersection-lower-bound-expanded-iteration-k}, \eqref{eq:ray-intersection-upper-bound-iteration-k} & Lower and upper ray intersection bounds for $\beta^*$ at iteration $k$. \\
    \addlinespace[5pt]
    $M^{i*}_k$, $(\nu^*_m)_{m \in M^i_k}$ & \eqref{eq:ray-intersection-lower-bound-iteration-k} & Optimal Simplex basis and conic combination coefficients for the inner approximation $\mathcal{P}^i_k$ at iteration $k$. \\
    \bottomrule
\end{tabularx}
\end{table*}

The algorithm proposed in this paper is based on the following observation:
Let $\mathcal{C}^i \subset \mathcal{C} \subset \mathcal{C}^o$ be closed convex sets, where $\mathcal{C}^i$ and $\mathcal{C}^o$ are inner and outer approximations of $\mathcal{C}$.
Also, let $\beta^{l}$, $\beta^*$, and $\beta^{u}$ be the optimal values of the ray intersection problem \eqref{eq:ray-intersection-definition} (for a fixed value of $p$) for $\mathcal{C}^i$, $\mathcal{C}$, and $\mathcal{C}^o$, respectively\footnote{
If \eqref{eq:ray-intersection-definition} is infeasible for $\mathcal{C}^i$, then $\beta^l = -\infty$.
Likewise, if \eqref{eq:ray-intersection-definition} is unbounded for $\mathcal{C}^o$, then $\beta^u = \infty$.
}.
Then, $\beta^{l} \leq \beta^* \leq \beta^u$, i.e., $\beta^l$ and $\beta^u$ provide lower and upper bounds for the optimal value $\beta^*$.
Thus, by appropriately choosing the inner and outer approximations $\mathcal{C}^i$ and $\mathcal{C}^o$, we can closely approximate $\beta^*$.

At any iteration $k \geq 0$ of the algorithm, we use polyhedral inner and outer approximations, denoted by $\mathcal{P}^i_k$ and $\mathcal{P}^o_k$, respectively.
In particular, $\mathcal{P}^i_k$ is represented in vertex form and $\mathcal{P}^o_k$ in halfspace form.

To construct $\mathcal{P}^i_k$ and $\mathcal{P}^o_k$, we first note that for any nonzero normal vector $\lambda$, we have that $\langle \lambda, z\rangle \leq s_v[\mathcal{C}](\lambda)$ $\forall z \in \mathcal{C}$ (by \eqref{subeq:support-function}).
Then, at each iteration $k \geq 1$ of the algorithm, we use the nonzero normal vector $\lambda_{k-1}$ from the previous iteration to compute the support function value $v_k := s_v[\mathcal{C}](\lambda_{k-1}) > 0$ and support point $z_k := s_p[\mathcal{C}](\lambda_{k-1}) \in \mathcal{C}$ using \eqref{eq:minkowski-difference-support-function}.
The polyhedral approximations $\mathcal{P}^i_k$ and $\mathcal{P}^o_k$ at iteration $k$ are then constructed as follows:
\begin{subequations}
\label{eq:polyhedral-approximation-definition}
\begin{align}
    \mathcal{P}^i_k & = \conv\{z_m: m \in M^i_k\}, \label{subeq:polyhedral-approximation-definition-inner} \\
    \mathcal{P}^o_k & = \{z: \langle \lambda_{m-1}, z\rangle \leq v_m, \, \forall m \in M^o_k\}, \label{subeq:polyhedral-approximation-definition-outer}
\end{align}
\end{subequations}
where $M^i_k \subset \{-l+1, ..., k\}$ and $M^o_k \subset \{1, ..., k\}$ are the subsets of iteration indices chosen for the construction of the approximations (the initialization of the index sets will be discussed in the following subsection).
Subsequently, the lower and upper bounds $\beta^l_k$ and $\beta^u_k$ (the optimal values of \eqref{eq:ray-intersection-definition} for $\mathcal{P}^i_k$ and $\mathcal{P}^o_k$, respectively) are computed to check convergence.
Finally, the normal vector $\lambda_k$ is computed for the next iteration.

Thus, it follows that $\mathcal{P}^i_k \subset \mathcal{C} \subset \mathcal{P}^o_k$, and therefore $\beta^l_k \leq \beta^* \leq \beta^u_k$ for all iterations $k \geq 1$ (and by construction at $k = 0$).
$\beta^u_k$ can be computed using \eqref{eq:ray-intersection-definition} and \eqref{subeq:polyhedral-approximation-definition-outer} as
\begin{subequations}
\label{eq:ray-intersection-upper-bound-iteration-k}
\begin{alignat}{4}
    \beta^u_k = \ && \max_{\beta, z} \ & \beta, \\
    && \text{s.t.} \ & \langle \lambda_{m-1}, z\rangle \leq v_m, \quad \forall m \in M^o_k, \\
    &&& z = \beta p. \\
    = \ \min_{\substack{m \in M^o_k\\ \langle \lambda_{m-1}, p\rangle > 0}} \ \frac{v_m}{\langle \lambda_{m-1}, p\rangle}. \span\span\span
\end{alignat}
\end{subequations}
The above equation suggests we can simplify the outer approximation $\mathcal{P}^o_k$ by choosing $M^o_k$ as a singleton set.
Computing $\beta^l_k$ requires the solution of the following linear program (LP), obtained from \eqref{eq:ray-intersection-definition} and \eqref{subeq:polyhedral-approximation-definition-inner}:
\begin{subequations}
\label{eq:ray-intersection-lower-bound-expanded-iteration-k}
\begin{align}
\beta^l_k = \ \max_{\beta, (\mu_m)_{m \in M^i_k} \geq \mathbb{0}} \ \ & \beta, \\[-5pt]
\text{s.t.} \
& \textstyle\sum\limits_{m \in M^i_k} \mu_m \Bigl[\begin{matrix}z_m \\ 1 \end{matrix}\Bigr] = \Bigl[\begin{matrix}\beta p \\ 1\end{matrix}\Bigr], \label{subeq:ray-intersection-lower-bound-expanded-iteration-k-equality}
\end{align}
\end{subequations}
where the variables $(\mu_m)_{m \in M^i_k}$ are the convex combination coefficients for the vertices of $\mathcal{P}^i_k$.
For all iterations $k \geq 0$, we will ensure that \eqref{eq:ray-intersection-lower-bound-expanded-iteration-k} is feasible and $\beta^l_k > 0$.
Then, substituting $\nu_m = \mu_m / \beta$ in \eqref{eq:ray-intersection-lower-bound-expanded-iteration-k} (this can be done because $\beta^l_k > 0$), we obtain the following equivalent LP: 
\begin{subequations}
\label{eq:ray-intersection-lower-bound-iteration-k}
\begin{align}
(\beta^l_k)^{-1} = \ \min_{(\nu_m)_{m \in M^i_k} \geq \mathbb{0}} \ \ & \textstyle\sum\limits_{m \in M^i_k} \nu_m, \label{subeq:ray-intersection-lower-bound-iteration-k-cost} \\
\text{s.t.} \
& \textstyle\sum\limits_{m \in M^i_k} \nu_m z_m = p, \label{subeq:ray-intersection-lower-bound-iteration-k-equality} 
\end{align}
\end{subequations}
where the variables $(\nu_m)_{m \in M^i_k}$ are the conic combination coefficients for the vertices of $\mathcal{P}^i_k$.
The LP \eqref{eq:ray-intersection-lower-bound-iteration-k} is in standard form, and can be solved using the Simplex method~\cite[Ch.~4]{nemirovski2024introduction}.
The Simplex method returns an optimal basis $M^{i*}_k \subset M^i_k$ with $|M^{i*}_k| \leq l$ such that the corresponding optimal solution $(\nu^*_m)_{m \in M^i_k}$ to \eqref{eq:ray-intersection-lower-bound-iteration-k} satisfies $\nu^*_m = 0$ $\forall m \in M^i_k \setminus M^{i*}_k$ (the existence of such a basis also follows from the Caratheodory Theorem~\cite[Thm.~2.4]{nemirovski2024introduction} applied to \eqref{eq:ray-intersection-lower-bound-expanded-iteration-k}).
Fig.~\ref{subfig:algorithm-iteration-k} depicts an example of the algorithm state at the end of iteration $k$. 

Next, we will discuss the initialization and update steps for the polyhedral approximations $\mathcal{P}^i_k$ and $\mathcal{P}^o_k$, the index sets $M^i_k$ and $M^o_k$, and the normal vector $\lambda_k$.

\begin{figure}[!t]
\centering
\subfloat[
The algorithm state at the end of iteration $k$.
The polyhedra $\mathcal{P}^i_k$ (in dark gray) and $\mathcal{P}^o_k$ (in light gray) approximate the convex set $\mathcal{C}$ (in blue).
For this example, the index sets used to construct $\mathcal{P}^i_k$ and $\mathcal{P}^o_k$ are given by $M^i_k = \{m_1, m_2, m_3\}$ and $M^o_k = \{m_2\}$.
The bounds $\beta^l_k$ and $\beta^u_k$ on the optimal value $\beta^*$ are computed using the intersection points of the ray $\vec{\mathbb{0}p}$ (in red) with $\mathcal{P}^i_k$ and $\mathcal{P}^o_k$, respectively.
The normal vector $\lambda_{m_2-1}$ achieves the value $\beta^u_k$ for \eqref{eq:ray-intersection-upper-bound-iteration-k}, while the optimal basis for the LP \eqref{eq:ray-intersection-lower-bound-iteration-k} that achieves the value $\beta^l_k$ is $M^{i*}_k = \{m_1, m_2\}$.
]{%
\begin{minipage}{\columnwidth}
    \centering
    \includegraphics[width=0.90\columnwidth]{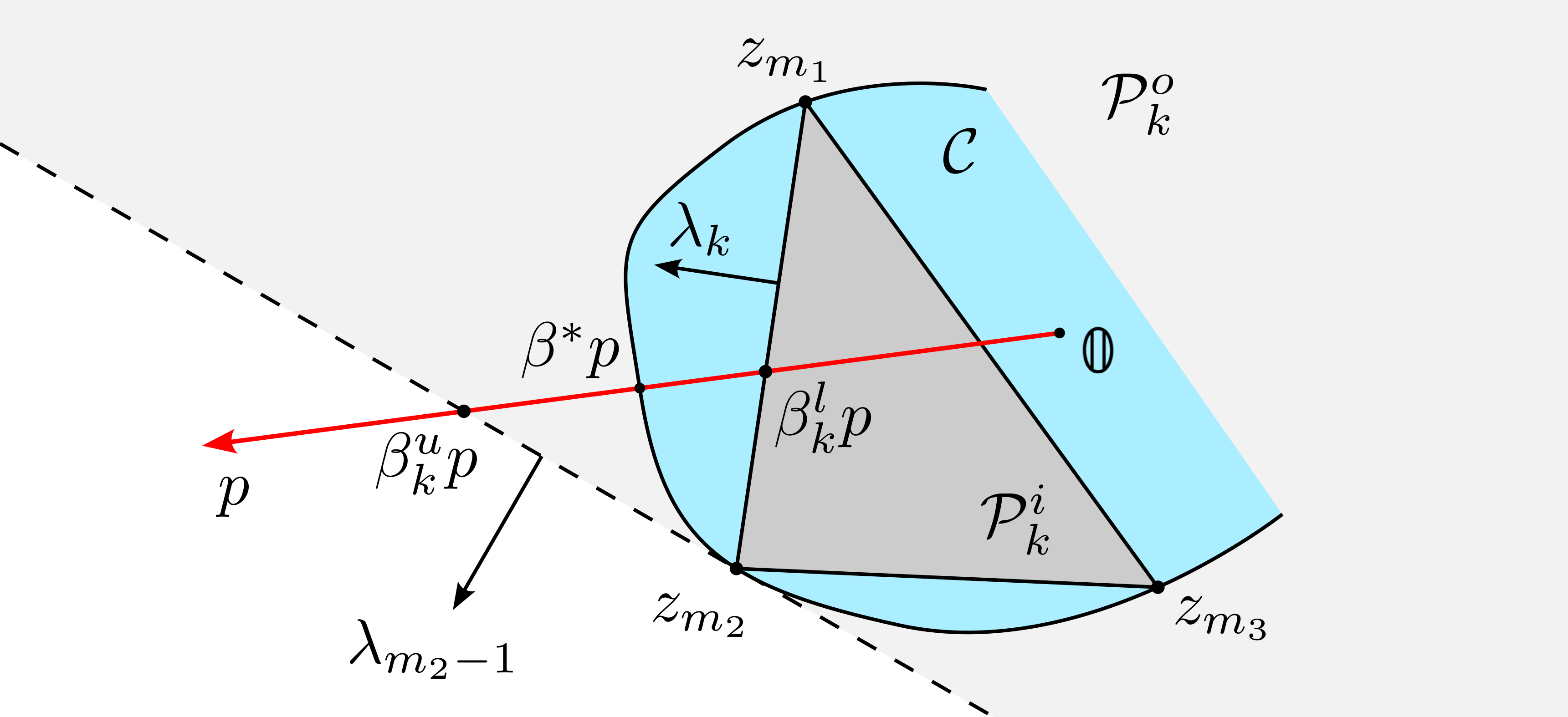}%
    \label{subfig:algorithm-iteration-k}
\end{minipage}%
}
\hfil
\subfloat[
The algorithm during iteration $k+1$, after the \emph{outer approximation update} step from Fig.~\ref{subfig:algorithm-iteration-k}.
The index set $M^o_{k+1}$, corresponding to the outer approximation $\mathcal{P}^o_{k+1}$, is set to $\{k+1\}$ according to \eqref{eq:outer-approximation-update-iteration-k}.
We can see that $\beta^l_k \leq \beta^* \leq \beta^u_{k+1} \leq \beta^u_k$.
]{%
\begin{minipage}{\columnwidth}
    \centering
    \includegraphics[width=0.90\columnwidth]{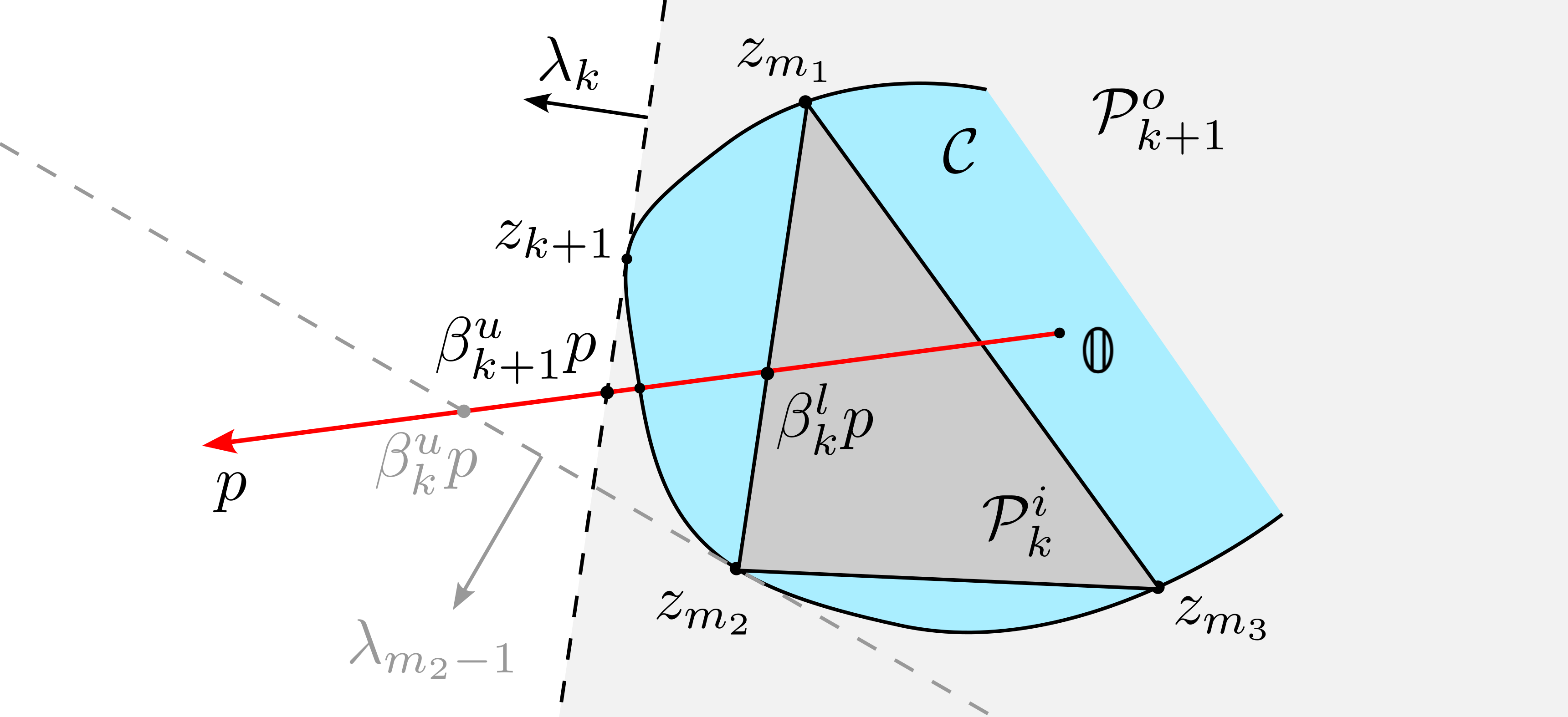}%
    \label{subfig:algorithm-iteration-outer-k-plus-1}
\end{minipage}%
}
\hfil
\subfloat[
The algorithm at the end of iteration $k+1$, after the \emph{inner approximation update} step from Fig.~\ref{subfig:algorithm-iteration-outer-k-plus-1}.
The support point $z_{k+1}$, obtained using $\lambda_k$, is added to the set of inner approximation vertices.
The lower bound $\beta^l_{k+1}$ and optimal basis $M^{i*}_{k+1} = \{m_2, k+1\}$ are computed using the Simplex method (see Sec.~\ref{subsubsec:inner-approximation-update}).
By \eqref{eq:simplex-method-iteration-k-normal}, $\lambda_{k+1}$ is orthogonal to the unique affine space containing $\{z_m: m \in M^{i*}_{k+1}\}$ with $\langle \lambda_{k+1}, p\rangle > 0$.
We can see that $\beta^l_k \leq \beta^l_{k+1} \leq \beta^* \leq \beta^u_{k+1}$.
A pruning step is used to reduce the number of inner approximation vertices (see Rem.~\ref{rem:inner-approximation-pruning}).
]{%
\begin{minipage}{\columnwidth}
    \centering
    \includegraphics[width=0.90\columnwidth]{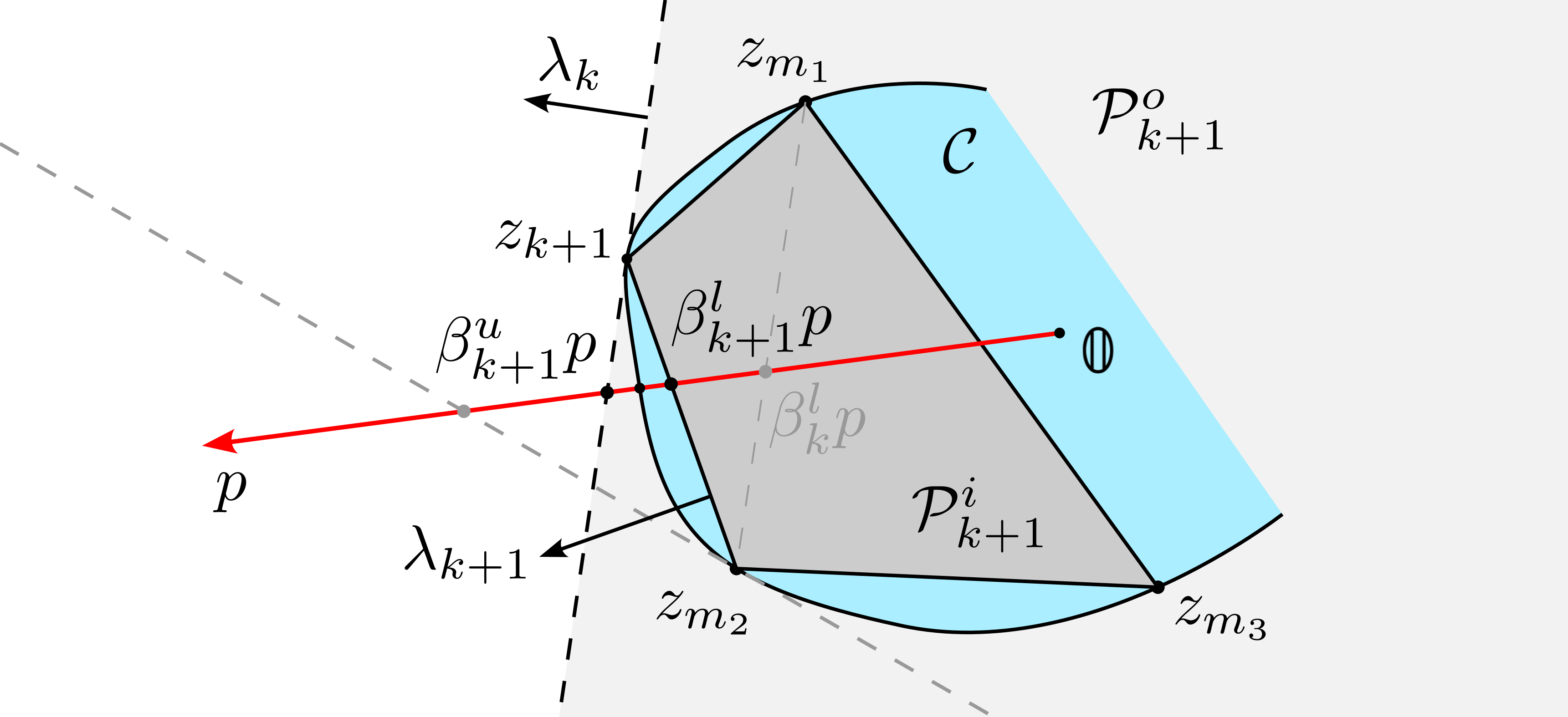}%
    \label{subfig:algorithm-iteration-inner-k-plus-1}
\end{minipage}%
}
\caption{
Examples of the growth distance algorithm state at the end of iteration $k$, and after the inner and outer approximation update steps.
The inner and outer approximations $\mathcal{P}^i_k$ and $\mathcal{P}^o_k$ are updated to better approximate the ray intersection point $\beta^* p$ and obtain an $\epsilon_{tol}$-optimal solution to the ray intersection problem \eqref{eq:ray-intersection-definition}.
Lastly, an $\epsilon_{tol}$-optimal solution to the growth distance problem \eqref{eq:growth-distance-definition} is retrieved from the $\epsilon_{tol}$-optimal solution to the ray intersection problem (see Prop.~\ref{prop:ray-intersection-problem}).
Note that $p$ is constant during the algorithm execution.
}
\label{fig:growth-distance-algorithm}
\end{figure}

\subsection{Initialization}
\label{subsec:initialization}

\subsubsection{Outer Approximation Initialization}
\label{subsubsec:outer-approximation-initialization}

At iteration $k = 0$, we choose the index set $M^o_0 := \emptyset$.
Using \eqref{subeq:polyhedral-approximation-definition-outer} and \eqref{eq:ray-intersection-upper-bound-iteration-k}, we get that the outer approximation $\mathcal{P}^o_0 = \real^l$ and the ray intersection upper bound $\beta^u_0 = \infty$.

\subsubsection{Inner Approximation Initialization}
\label{subsubsec:inner-polyhedral-initialization}

To initialize the inner approximation $\mathcal{P}^i_0 \subset \mathcal{C}$, we choose $l$ points $z_m \in \mathcal{C}$ for $m \in M^i_0 := \{-l+1, ..., 0\}$ such that
\begin{equation}
\label{eq:inner-approximation-initialization-properties}
\begin{gathered}
    \{z_m\}_{m \in M^i_0} \subset \mathcal{C} \text{ is a linearly independent set}, \\
    p \in \cone\{z_m: m \in M^i_0\}.
\end{gathered}
\end{equation}
Before elaborating on the rationale, we first show how to select such points.
Let, for $i \in \{1, 2\}$, the inradius of the set $\mathcal{C}^i$ at the center point $p^i$, $r^i = r(\mathcal{C}^i, p^i)$, be known.
So, we have that $B_{\underline{r}}(\mathbb{0}) \subset \mathcal{C} = \mathcal{C}^1 - \mathcal{C}^2 + \{p\}$, where $\underline{r} = r^1 + r^2 > 0$.
Then, the points $(z_m)_{m \in M^i_0}$ can be chosen by first computing $l-1$ unit vectors $(p^\perp_m)_{m \in \{-l+1, ..., -1\}}$ that are independent and orthogonal to $p$, and setting
\begin{subequations}
\label{eq:inner-approximation-initialization-construction}
\begin{align}
    z_m & = K p^\perp_m + \epsilon p/\lVert p\rVert, \quad \forall m \in \{-l+1, ..., -1\}, \\
    z_0 & = l\epsilon p/\lVert p\rVert - \textstyle\sum\limits_{m= -l+1}^{-1} z_m,
\end{align}
\end{subequations}
where $\epsilon > 0$ is a small constant and $K > 0$ is such that $\lVert z_m\rVert \leq \underline{r}$ for all $m \in M^i_0$.
Note that $\sum_{m \in M^i_0} z_m = l \epsilon p/\lVert p\rVert$, and thus $(z_m)_{m \in M^i_0}$ satisfy \eqref{eq:inner-approximation-initialization-properties}.

Such a choice of $\mathcal{P}^i_0$ is made to satisfy two key properties.
First, we want to ensure \textit{primal feasibility} at $k = 0$, i.e., there should exist $(\nu_m)_{m \in M^i_k} \geq \mathbb{0}$ such that \eqref{subeq:ray-intersection-lower-bound-iteration-k-equality} holds.
Then, we can see that $(\beta, z)$, where
\begin{equation}
\label{eq:conic-combination-to-optimal-solution}
    \beta = \biggl(\textstyle\sum\limits_{m \in M^i_k} \nu_m\biggr)^{-1}, \quad z = \textstyle\sum\limits_{m \in M^i_k} \beta \nu_m z_m,
\end{equation}
is feasible for \eqref{eq:ray-intersection-definition}.
Second, we want the \textit{linear independence} of the optimal points $\{z_m\}_{m \in M^{i*}_k}$ at $k = 0$, to initialize the Simplex method for \eqref{eq:ray-intersection-lower-bound-iteration-k} at iteration $k+1$.
Since $\{z_m\}_{m \in M^i_0}$ is a linearly independent set and $M^{i*}_0 = M^i_0$, the linear independence property is satisfied at $k = 0$.

Finally, the normal vector $\lambda_k$ at $k = 0$ is chosen to be orthogonal to the affine space containing $\{z_m\}_{m \in M^{i*}_k}$ with $\langle \lambda_k, p\rangle > 0$.
Following the construction in \eqref{eq:inner-approximation-initialization-construction}, we can choose the normal vector $\lambda_0 = p/\lVert p\rVert_\infty$.
The initialization subroutine for the algorithm is outlined in Alg.~\ref{alg:initialization}.

\algsetup{indent=1em} 
\begin{algorithm}[!t]
\caption{Initialization}\label{alg:initialization}
\begin{algorithmic}[1] 
\REQUIRE $p$, $\underline{r}$ \LONGCOMMENT{150pt}{Relative center position and inradius lower bound, $\underline{r} = r^1 + r^2$}
\STATE \textbf{Function} \texttt{Initialize}():
\STATE \hspace{1em} $\displaystyle(M^o_0, \beta^u_0) \gets (\emptyset, \infty)$
\STATE \hspace{1em} $\displaystyle M^i_0 \gets \{-l+1, ..., 0\}$
\STATE \hspace{1em} $(z_m)_{m \in M^i_0} \gets$ computed using \eqref{eq:inner-approximation-initialization-construction}
\STATE \hspace{1em} $\displaystyle (M^{i*}_0, \beta^l_0) \gets (M^i_0, \epsilon/\lVert p\rVert)$
\STATE \hspace{1em} $\displaystyle\lambda_0 \gets p/\lVert p\rVert_\infty$
\STATE \hspace{1em} \textbf{return} $\displaystyle (M^o_0, \beta^u_0), (M^i_0, M^{i*}_0, \beta^l_0), \lambda_0$
\STATE \textbf{End Function}
\end{algorithmic}
\end{algorithm}

Next, we discuss the update step for the inner and outer approximations, ensuring that the primal feasibility and linear independence properties of the inner approximation are satisfied at each iteration $k \geq 1$.
The importance of the primal feasibility property comes from the fact that the algorithm always maintains a feasible solution $(\beta, z)$ to \eqref{eq:ray-intersection-definition}.
Thus, when the algorithm optimally terminates, we can compute an $\epsilon_{tol}$-optimal solution to \eqref{eq:ray-intersection-definition} using \eqref{eq:conic-combination-to-optimal-solution}.
The linear independence property is crucial because it allows us to initialize the Simplex method for the LP \eqref{eq:ray-intersection-lower-bound-iteration-k} at the next iteration, as emphasized in the following subsection.

\subsection{Polyhedral Approximation Update}
\label{subsec:polyhedral-approximation-update}

At the start of iteration $k \geq 1$, the normal vector $\lambda_{k-1}$ from the previous iteration is used to compute the support function value $v_k = s_v[\mathcal{C}](\lambda_{k-1}) > 0$ and support point $z_k = s_p[\mathcal{C}](\lambda_{k-1}) \in \mathcal{C}$ using \eqref{eq:minkowski-difference-support-function}.
We assume that the primal feasibility and linear independence properties are satisfied at the end of iteration $k-1$, i.e., the optimal basis $M^{i*}_{k-1} \subset M^i_{k-1}$ is such that $|M^{i*}_{k-1}| = l$ and
\begin{equation}
\label{eq:inner-approximation-properties-iteration-k-1}
\begin{gathered}
    \{z_m\}_{m \in M^{i*}_{k-1}} \subset \mathcal{C} \text{ is a linearly independent set}, \\
    p \in \cone\{z_m: m \in M^{i*}_{k-1}\}.
\end{gathered}
\end{equation}
Additionally, we assume that $|M^o_{k-1}| = 1$ for $k \geq 2$ and the normal vector $\lambda_{k-1}$ satisfies
\begin{equation}
\label{eq:normal-properties-iteration-k-1}
\begin{gathered}
    \langle \lambda_{k-1}, z_{m_1}\rangle = \langle \lambda_{k-1}, z_{m_2}\rangle, \quad \forall m_1, m_2 \in M^{i*}_{k-1}, \\
    \langle \lambda_{k-1}, p\rangle > 0.
\end{gathered}
\end{equation}
The first condition in \eqref{eq:normal-properties-iteration-k-1} implies that $\lambda_{k-1}$ is orthogonal to the unique affine space containing $\{z_m\}_{m \in M^{i*}_{k-1}}$.
From the construction in Sec.~\ref{subsec:initialization}, \eqref{eq:inner-approximation-properties-iteration-k-1} and \eqref{eq:normal-properties-iteration-k-1} are satisfied at the start of iteration $k = 1$.
Assuming \eqref{eq:inner-approximation-properties-iteration-k-1} and \eqref{eq:normal-properties-iteration-k-1} hold at the start of iteration $k$, we will show in Sec.~\ref{subsubsec:inner-approximation-update} that they also hold at the end of iteration $k$, thus showing that \eqref{eq:inner-approximation-properties-iteration-k-1} and \eqref{eq:normal-properties-iteration-k-1} are algorithmic invariants.
We also note that a normal vector $\lambda_{k-1}$ satisfying the first condition in \eqref{eq:normal-properties-iteration-k-1} can be obtained by solving the set of equations $\langle \lambda_{k-1}, z_{m}\rangle = 1$ for all $m \in M^{i*}_{k-1}$, which results in
\begin{equation}
\label{eq:normal-vector-iteration-k-1}
    \lambda_{k-1} = \bigl[\begin{matrix}z_{m_1} & \cdots & z_{m_l} \end{matrix}\bigr]^{-\top} \mathbb{1},
\end{equation}
where $\{m_1, ..., m_l\} = M^{i*}_{k-1}$.
Note the matrix in \eqref{eq:normal-vector-iteration-k-1} is nonsingular, since $\{z_m\}_{m \in M^{i*}_{k-1}}$ is a linearly independent set (see \eqref{eq:inner-approximation-properties-iteration-k-1}).
Additionally, the value of $\lambda_{k-1}$ in \eqref{eq:normal-vector-iteration-k-1} satisfies the second condition in \eqref{eq:normal-properties-iteration-k-1} since $\langle \lambda_{k-1}, p\rangle = (\beta^l_{k-1})^{-1} > 0$ (by \eqref{eq:ray-intersection-lower-bound-iteration-k}).
We now detail how $v_k$ and $z_k$ are used to update the polyhedral approximations $\mathcal{P}^i_k$ and $\mathcal{P}^o_k$.

\subsubsection{Outer Approximation Update}
\label{subsubsec:outer-approximation-update}

By assumption, $M^o_{k-1}$ is the empty set at $k = 1$ and a singleton set for $k \geq 2$.
Since we want to find the best upper bound $\beta^u_k$ at iteration $k$, we compute $\beta^u_k$ and $M^o_k$ using \eqref{eq:ray-intersection-upper-bound-iteration-k} as follows:
\begin{subequations}
\label{eq:outer-approximation-update-iteration-k}
\begin{align}
    M^o_k & = \{m^*\}, \quad m^* \in \argmin_{m \in M^o_{k-1} \cup \{k\}} \frac{v_m}{\langle\lambda_{m-1}, p\rangle}, \\
    \beta^u_k & = \frac{v_{m^*}}{\langle\lambda_{m^*-1}, p\rangle} = \min\biggl\{\beta^u_{k-1}, \frac{v_k}{\langle\lambda_{k-1}, p\rangle}\biggr\}.
\end{align}
\end{subequations}
In other words, $\beta^u_k$ keeps track of the minimum value of $v_k/\langle\lambda_{k-1}, p\rangle$, and $M^o_k$ keeps track of the index that achieves the minimum value.
Note that the constraint $\langle \lambda_{m-1}, p\rangle > 0$ in \eqref{eq:ray-intersection-upper-bound-iteration-k} is not required since it is always satisfied by \eqref{eq:normal-properties-iteration-k-1}.

For the algorithm state at iteration $k$ in Fig.~\ref{subfig:algorithm-iteration-k}, Fig.~\ref{subfig:algorithm-iteration-outer-k-plus-1} depicts how the outer approximation is updated.

\subsubsection{Inner Approximation Update}
\label{subsubsec:inner-approximation-update}

The inner approximation $\mathcal{P}^i_k$ and lower bound $\beta^l_k$ are updated by solving the LP subproblem \eqref{eq:ray-intersection-lower-bound-iteration-k} with the new point $z_k$.
In particular, we set $M^i_k = M^i_{k-1} \cup \{k\}$ and solve the LP \eqref{eq:ray-intersection-lower-bound-iteration-k} using the Simplex method \cite[Ch.~4]{nemirovski2024introduction} with the initial basis as the optimal basis $M^{i*}_{k-1}$ from the previous iteration.
An added benefit of using the Simplex method is that the normal vector $\lambda_k$ can be computed together with the LP solution.
Next, we elaborate on the computations involved in one iteration \textit{of the Simplex method}.
Our goal is not to describe the Simplex method in its entirety, but to highlight that both \eqref{eq:inner-approximation-properties-iteration-k-1} and \eqref{eq:normal-properties-iteration-k-1} hold when the Simplex method for the LP \eqref{eq:ray-intersection-lower-bound-iteration-k} terminates.

The Simplex method (for solving a standard form LP) iteratively chooses basic variables to reduce the LP cost, while maintaining primal feasibility.
For the standard form LP \eqref{eq:ray-intersection-lower-bound-iteration-k} at iteration $k$ with initial basis $M^{i*}_k$, the nonbasic variables are the variables $(\nu_m)$ for $m \in M^i_k \setminus M^{i*}_k$ which are set to zero.
The values of the basic variables $(\nu_m)$ for $m \in M^{i*}_k$ are computed to be nonnegative and satisfy \eqref{subeq:ray-intersection-lower-bound-iteration-k-equality}.
Then, the Simplex method updates the basis $M^{i*}_k$ to reduce the cost associated with the basic variables.
So, the Simplex method iteratively computes primal feasible solutions to the LP \eqref{eq:ray-intersection-lower-bound-iteration-k} with monotonically decreasing cost.

Conceptually, the Simplex algorithm is represented by a \emph{tableau}.
We start with the initial basis $M^{i*}_k = M^{i*}_{k-1}$, which is guaranteed to be feasible at iteration $k$, and represent the Simplex method state with the following tableau~\cite[Ch.~4]{nemirovski2024introduction}:
\begin{equation}
\label{eq:simplex-tableau-iteration-k-initial}
\renewcommand{\arraystretch}{1.25}
\begin{array}{ccc:cc|c}
    \multicolumn{3}{c:}{\mathbb{1}^\top} & \mathbb{1}^\top & 1 & 0 \\
    \hline
    z_{m_1} & \cdots & \multicolumn{1}{c:}{z_{m_l}} & \cdots & z_k & p
\end{array}
\end{equation}
where $\{m_1, ..., m_l\} = M^{i*}_k$ is the current basis.
Each column, except the rightmost one, corresponds to a nonnegative variable $\nu_m$ in the LP \eqref{eq:ray-intersection-lower-bound-iteration-k}, the top row corresponds to the cost coefficients in \eqref{subeq:ray-intersection-lower-bound-iteration-k-cost}, and the remaining rows correspond to the equality constraint \eqref{subeq:ray-intersection-lower-bound-iteration-k-equality}.
Similar to \eqref{eq:normal-vector-iteration-k-1}, the normal vector $\lambda_k$ is computed using the current basis $M^{i*}_k$ as
\begin{align}
\label{eq:simplex-method-iteration-k-normal}
    \lambda_k & = Z^{-\top} \mathbb{1}, \quad Z = \bigl[\begin{matrix}z_{m_1} & \cdots & z_{m_l} \end{matrix}\bigr].
\end{align}
Then, the tableau in \eqref{eq:simplex-tableau-iteration-k-initial} is pre-multiplied by the matrix
\begin{equation*}
    \begin{bmatrix}
        1 & -\lambda_k^\top \\
        \mathbb{0} & Z^{-1}
    \end{bmatrix},
\end{equation*}
which results in the following reduced tableau:
\begin{equation}
\label{eq:reduced-simplex-tableau-iteration-k-initial}
\renewcommand{\arraystretch}{1.25}
\begin{array}{c:cc|c}
    \multicolumn{1}{c:}{\mathbb{0}^\top} & d^\top & \tilde{d} & -(\beta^l_k)^{-1} \\
    \hline
    \multicolumn{1}{c:}{I} & \cdots & (\tilde{\nu}_m)_{m \in M^{i*}_k} & (\nu^*_m)_{m \in M^{i*}_k}
\end{array}
\end{equation}
where the terms $-(\beta^l_k)^{-1}$ and $(\nu^*_m)_{m \in M^{i*}_k} \geq \mathbb{0}$ in the rightmost column are the negative cost and basic variable values associated with the current basis $M^{i*}_k$.
Likewise, the term $(\tilde{\nu}_m)_{m \in M^{i*}_k}$ is the coordinate of $z_k$ with respect to the current basis vectors $(z_m)_{m \in M^{i*}_k}$.

The Simplex method proceeds by replacing one index (the outgoing index) in the current basis $M^{i*}_k$ by an index in $M^i_k \setminus M^{i*}_k$ (the incoming index).
The incoming index $m_{in} \in M^i_k \setminus M^{i*}_k$ is chosen as the index with the most negative value in the top row of the reduced tableau \eqref{eq:reduced-simplex-tableau-iteration-k-initial}; if all values in the top row are nonnegative, then the Simplex method reaches optimality.
\textit{For the first iteration of the Simplex method}, the vector $d$ in \eqref{eq:reduced-simplex-tableau-iteration-k-initial} is nonnegative (since $M^{i*}_k = M^{i*}_{k-1}$ is the optimal basis at iteration $k-1$) and $\tilde{d} \leq 0$ since
\begin{equation*}
\begin{split}
    \tilde{d} & = 1 - \langle \lambda_k, z_k\rangle = 1 - \langle \lambda_{k-1}, z_k\rangle = 1 - s_v[\mathcal{C}](\lambda_{k-1}), \\
    & \leq 1 - \langle \lambda_{k-1}, z_{m_1}\rangle = 0.
\end{split}
\end{equation*}
So, if $\tilde{d} = 0$, the Simplex method at iteration $k$ reaches optimality (in fact, the growth distance algorithm also reaches optimality, i.e., $\beta^u_k = \beta^l_k$).
If $\tilde{d} < 0$, the incoming index at the first Simplex iteration is $m_{in} = k$.
The outgoing index $m_{out} \in M^{i*}_k$ is determined as follows (see \cite[Ch.~4.3.1]{nemirovski2024introduction}):
\begin{equation}
\label{eq:simplex-iteration-k-outgoing-index}
    m_{out} = \argmin_{\substack{m \in M^{i*}_k, \tilde{\nu}_m > 0}} \ \frac{\nu^*_m}{\tilde{\nu}_m}. 
\end{equation}
At least one $m \in M^{i*}_k$ such that $\tilde{\nu}_m \geq 1/l > 0$ exists since
\begin{equation*}
\begin{split}
    \textstyle\sum\limits_{m \in M^{i*}_k} \tilde{\nu}_m & = 1 - \tilde{d} \geq 1.
\end{split}
\end{equation*}
After the incoming and outgoing indices are determined, the current basis is updated as $M^{i*}_k \gets (M^{i*}_k \cup \{m_{in}\}) \setminus \{m_{out}\}$, and the Simplex method continues from the updated tableau, similar to \eqref{eq:simplex-tableau-iteration-k-initial}.
Finally, the Simplex method optimally terminates in a finite number of iterations (Bland's rule can be applied to prevent cycling~\cite[Ch.~4.3.3]{nemirovski2024introduction}).

We highlight a key property of the Simplex method applied to \eqref{eq:ray-intersection-lower-bound-iteration-k} \textit{at iteration $k \geq 1$}.
The Simplex method ensures the basis $M^{i*}_k$ maintains primal feasibility and linear independence (as specified in \eqref{eq:inner-approximation-properties-iteration-k-1}) throughout its execution and at optimality~\cite[Ch.~4.3.1]{nemirovski2024introduction}.
Similarly, the normal vector $\lambda_k$ (computed per \eqref{eq:simplex-method-iteration-k-normal}) satisfies \eqref{eq:normal-properties-iteration-k-1} for its corresponding basis throughout the Simplex iterations, including at optimality.
Therefore, both \eqref{eq:inner-approximation-properties-iteration-k-1} and \eqref{eq:normal-properties-iteration-k-1} hold at the start of all iterations $k \geq 0$, and are algorithmic invariants.

\algsetup{indent=1em} 
\begin{algorithm}[!t]
\caption{Polyhedral Approximation Update}\label{alg:update}
\begin{algorithmic}[1] 
\REQUIRE $p$, $s_v[\mathcal{C}]$, $s_p[\mathcal{C}]$ \LONGCOMMENT{120pt}{Relative center position and support functions}
\STATE \textbf{Function} \texttt{Update}$\displaystyle(M^o_{k-1}, (M^i_{k-1}, M^{i*}_{k-1}), \lambda_{k-1})$:
\STATE \hspace{1em} $\displaystyle(v_k, z_k) \gets (s_v[\mathcal{C}](\lambda_{k-1}), s_p[\mathcal{C}](\lambda_{k-1}))$ \COMMENT{see \eqref{eq:minkowski-difference-support-function}}
\STATE \hspace{1em} $\displaystyle(M^o_k, \beta^u_k) \gets$ computed using \eqref{eq:outer-approximation-update-iteration-k}
\STATE \hspace{1em} $\displaystyle(M^i_k, M^{i*}_k) \gets (M^i_{k-1} \cup \{k\}, M^{i*}_{k-1})$
\STATE \hspace{1em} $\displaystyle(M^{i*}_k, \beta^l_k) \gets$ \LONGTEXT{158pt}{computed using the Simplex method (see Sec.~\ref{subsubsec:inner-approximation-update})}
\STATE \hspace{1em} $\displaystyle M^i_k \gets \texttt{Prune}(M^i_k, M^{i*}_k)$ \COMMENT{see Rem.~\ref{rem:inner-approximation-pruning}} \label{subalg:update-pruning}
\STATE \hspace{1em} $\displaystyle\lambda_k \gets$ computed using \eqref{eq:simplex-method-iteration-k-normal}
\STATE \hspace{1em} \textbf{return} $\displaystyle(M^o_k, \beta^u_k), (M^i_k, M^{i*}_k, \beta^l_k), \lambda_k$
\STATE \textbf{End Function}
\end{algorithmic}
\end{algorithm}

For the algorithm state at iteration $k$ in Fig.~\ref{subfig:algorithm-iteration-k}, Fig.~\ref{subfig:algorithm-iteration-inner-k-plus-1} depicts how the inner approximation is updated.
The update subroutine for the algorithm is outlined in Alg.~\ref{alg:update}.
An additional step to reduce the size of $M^i_k$, and thus the size of the LP \eqref{eq:ray-intersection-lower-bound-iteration-k}, is used for the inner approximation update (see Alg.~\ref{alg:update}, line~\ref{subalg:update-pruning}), as discussed in the following remark.

\begin{remark} (Index set pruning for the inner approximation)
\label{rem:inner-approximation-pruning}
At each iteration $k \geq 1$ of the algorithm, the index $k$ is added to the inner approximation index set $M^i_k$.
To reduce the size of the LP \eqref{eq:ray-intersection-lower-bound-iteration-k}, we can prune the index set $M^i_k$ while ensuring that $M^{i*}_k \subset M^i_k$.
One heuristic for pruning $M^i_k$ is to maintain $|M^i_k| \leq M^i_{max}$, where $M^i_{max} \geq l$, by removing the smallest indices in $M^i_k \setminus M^{i*}_k$.
For two and three-dimensional convex sets, i.e., $l = 2$ and $3$, we can remove all nonoptimal indices from $M^i_k$.
In other words, we use $\texttt{Prune}(M^i_k, M^{i*}_k) = M^{i*}_k$ in Alg.~\ref{alg:update}, line~\ref{subalg:update-pruning}.
The theoretical framework for index set pruning is outside the scope of this paper.
\end{remark}

\subsection{Termination Criteria}
\label{subsec:termination-criteria}

The growth distance algorithm always maintains an upper bound $\beta^u$ and a lower bound $\beta^l > 0$ for $\beta^*$, and thus convergence can be determined by the relative gap $\beta^u / \beta^l - 1 \geq \beta^* / \beta^l - 1$.
The growth algorithm is terminated when the relative gap is not greater than $\epsilon_{tol}$ or if the number of iterations exceeds an algorithm parameter $K_{max} > 0$.

\subsection{Growth Distance Algorithm}
\label{subsec:growth-distance-algorithm}

Combining the initialization and update subroutines presented in the previous subsections, we can solve the ray intersection problem \eqref{eq:ray-intersection-definition} iteratively.
Upon optimal termination of the growth distance algorithm at iteration $k^* \leq K_{max}$, we obtain, from  \eqref{eq:conic-combination-to-optimal-solution}, an $\epsilon_{tol}$-optimal solution $(\beta^*_\epsilon, z^*_\epsilon)$ to the ray intersection problem \eqref{eq:ray-intersection-definition}, where
\begin{equation}
\label{eq:ray-intersection-solution}
    \beta^*_\epsilon = \beta^l_{k^*}, \quad z^*_\epsilon = \textstyle\sum\limits_{m \in M^{i*}_{k^*}} \beta^*_\epsilon \nu^*_m z_m.
\end{equation}
We can then retrieve an $\epsilon_{tol}$-optimal solution $(\alpha^*_\epsilon, z^{1*}_\epsilon, z^{2*}_\epsilon)$ for the growth distance problem \eqref{eq:growth-distance-definition} using Prop.~\ref{prop:ray-intersection-problem} as follows:
\begin{equation}
\label{eq:primal-optimal-from-ray-intersection}
    \alpha^*_\epsilon = 1/\beta^*_\epsilon, \quad z^{i*}_\epsilon = \textstyle\sum\limits_{m \in M^{i*}_{k^*}} \beta^*_\epsilon \nu^*_m z^i_m, \quad i \in \{1, 2\},
\end{equation}
where $z^1_m$ and $z^2_m$ are the support points of $\mathcal{C}^1$ and $\mathcal{C}^2$ used to compute $z_m = z^1_m - z^2_m + p$ (see \eqref{eq:minkowski-difference-support-function}) for all iterations $m$.
Note that when the algorithm terminates due to exceeding $K_{max}$ iterations, $(\beta^*_\epsilon, z^*_\epsilon)$ and $(\alpha^*_\epsilon, z^{1*}_\epsilon, z^{2*}_\epsilon)$ are $(\beta^u_{K_{max}}/\beta^l_{K_{max}} {-} 1)$-optimal.
We can also compute the $\epsilon_{tol}$-optimal normal vector, which achieves the upper bound $\beta^u_{k^*}$, as $\lambda_{m_o}$, where $M^o_{k^*} = \{m_o\}$.
The complete growth distance algorithm is stated in Alg.~\ref{alg:growth-distance-algorithm}.
The normal vector $\lambda_k$ is normalized after each update step (see Alg.~\ref{alg:growth-distance-algorithm}, line~\ref{subalg:lambda-normalization}) to address accuracy issues during the support function computation.
Recall that, in iteration $k+1$, $\lambda_k$ is only used to compute $z_{k+1}$, which is invariant to positive scaling of $\lambda_k$, and $v_k$, which is $1$st-order homogeneous with respect to $\lambda_k$.
Thus, \eqref{eq:outer-approximation-update-iteration-k}, and consequently the algorithm, is not affected by the normalization of the normal vector $\lambda_k$.

\algsetup{indent=1em} 
\begin{algorithm}[!t]
\caption{Growth Distance Algorithm}\label{alg:growth-distance-algorithm}
\begin{algorithmic}[1] 
\REQUIRE $p$, $\underline{r}$, $s_v[\mathcal{C}]$, $s_p[\mathcal{C}]$, $\epsilon_{tol}$, $K_{max}$
\STATE \textbf{Function} \texttt{GrowthDistance}():
\STATE \hspace{1em} \textbf{if} $\lVert p \rVert = 0$ \textbf{then} \label{subalg:growth-distance-algorithm-p-zero-norm-check}
\STATE \hspace{2em} $\displaystyle(\alpha^*_\epsilon, z^{1*}_\epsilon, z^{2*}_\epsilon) \gets (0, p^1, p^2)$
\STATE \hspace{2em} \textbf{return} $\displaystyle \alpha^*_\epsilon, (z^{1*}_\epsilon, z^{2*}_\epsilon)$
\STATE \hspace{1em} \textbf{end if}
\STATE \hspace{1em} $\displaystyle((M^o_0, \beta^u_0), (M^i_0, M^{i*}_0, \beta^l_0), \lambda_0) \gets$ \texttt{Initialize}() \label{subalg:growth-distance-algorithm-init}
\STATE \hspace{1em} \textbf{for} $k = 1$ to $K_{max}$ \textbf{do}
\STATE \hspace{2em} \LONGTEXT{206.5pt}{
$\displaystyle((M^o_k, \beta^u_k), (M^i_k, M^{i*}_k, \beta^l_k), \lambda_k) \gets$\\[1pt]
\hspace*{1em} \texttt{Update}$\displaystyle(M^o_{k-1}, (M^i_{k-1}, M^{i*}_{k-1}), \lambda_{k-1})$
}
\STATE \hspace{2em} $\displaystyle \lambda_k \gets \lambda_k / \lVert \lambda_k\rVert_\infty$ \label{subalg:lambda-normalization}
\STATE \hspace{2em} \textbf{if} $\displaystyle(\beta^u_k/\beta^l_k - 1 < \epsilon_{tol})$ \textbf{then}
\STATE \hspace{3em} \textbf{break}
\STATE \hspace{2em} \textbf{end if}
\STATE \hspace{1em} \textbf{end for}
\STATE \hspace{1em} $\displaystyle(k^*, \lambda^*_\epsilon) \gets (k, \lambda_{m_o})$, where $M^o_k = \{m_o\}$
\STATE \hspace{1em} $\displaystyle(\alpha^*_\epsilon, z^{1*}_\epsilon, z^{2*}_\epsilon) \gets$ computed using \eqref{eq:ray-intersection-solution}, \eqref{eq:primal-optimal-from-ray-intersection}
\STATE \hspace{1em} \textbf{return} $\displaystyle \alpha^*_\epsilon, (z^{1*}_\epsilon, z^{2*}_\epsilon), \lambda^*_\epsilon$
\STATE \textbf{End Function}
\end{algorithmic}
\end{algorithm}

\subsection{Algorithm Properties and Extensions}
\label{subsec:algorithm-properties-and-extensions}

In this subsection, we briefly discuss the properties and extensions of the growth distance algorithm.

\subsubsection{Growth Distance Algorithm Properties}
\label{subsubsec:growth-distance-algorithm-properties}

We first summarize the invariance and monotone convergence properties of the growth distance algorithm, Alg.~\ref{alg:growth-distance-algorithm}.

\begin{theorem} (Growth distance algorithm invariants)
\label{thm:growth-distance-algorithm-invariants}
The following properties hold for the growth distance algorithm Alg.~\ref{alg:growth-distance-algorithm} for all iterations $k \geq 0$, under the assumptions of Problem~\ref{problem:growth-distance}:
\begin{enumerate}[label=(\roman*)]
    \item (Primal feasibility) $(\beta, z)$ and $(\alpha, z^1, z^2)$, where
    \begin{subequations}
    \label{eq:primal-feasibility-invariant}
    \begin{align}
        \beta = \beta^l_k, \quad & z = \textstyle\sum\limits_{m \in M^{i*}_k} \beta \nu^*_m z_m, \\
        \alpha = 1/\beta, \quad & z^i = \textstyle\sum\limits_{m \in M^{i*}_k} \beta \nu^*_m z^i_m, \quad i \in \{1, 2\},
    \end{align}
    \end{subequations}
    are feasible for the ray intersection \eqref{eq:ray-intersection-definition} and growth distance \eqref{eq:growth-distance-definition} problems, respectively.
    Moreover, $\beta, \alpha > 0$.

    \item (Dual feasibility) $\lambda_k$ satisfies the following properties:
    \begin{equation}
    \label{eq:dual-feasibility-invariant}
    \begin{gathered}
        \langle \lambda_k, z_{m_1}\rangle = \langle \lambda_k, z_{m_2}\rangle, \quad \forall m_1, m_2 \in M^{i*}_k, \\
        \langle \lambda_{k-1}, p\rangle > 0.
    \end{gathered}
    \end{equation}

    \item (Simplex property) $\{z_m\}_{m \in M^{i*}_k} \subset \mathcal{C}$ is a linearly independent set.
\end{enumerate}
\end{theorem}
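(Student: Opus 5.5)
Our plan is to argue by induction on $k$, using the invariants \eqref{eq:inner-approximation-properties-iteration-k-1} and \eqref{eq:normal-properties-iteration-k-1} as the inductive hypothesis and deriving (i)--(iii) from them.

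\textbf{Base case $k=0$.} By \eqref{eq:inner-approximation-initialization-construction}, $\sum_{m\in M^i_0} z_m = l\epsilon p/\lVert p\rVert$. We need the basis to be linearly independent. Taking the $z_m$ for $m<0$ together with $\sum_m z_m$ as a spanning set, it suffices that $p$ and the independent vectors $p^\perp_m$ are independent, which holds since the $p^\perp_m$ are orthogonal to $p$. Membership $z_m \in B_{\underline r}(\mathbb{0}) \subset \mathcal{C}$ follows from the choice of $K$. This gives (iii).

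Setting $\nu_m = \lVert p\rVert/(l\epsilon)$ gives a nonnegative conic representation of $p$. By uniqueness of coordinates in a basis, this is the only feasible point, so $\beta^l_0 = \epsilon/\lVert p\rVert > 0$.

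For (ii), $\lambda_0 = p/\lVert p\rVert_\infty$ satisfies $\langle\lambda_0, z_m\rangle = \epsilon\lVert p\rVert/\lVert p\rVert_\infty$ for $m<0$. For $m=0$ it gives $l\epsilon(\cdot) - (l-1)\epsilon(\cdot)$, which is the same value. Also $\langle \lambda_0, p\rangle > 0$. (In (ii) the index $k-1$ should presumably read $k$.)

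\textbf{Inductive step.} Assume the invariants hold for the basis $M^{i*}_{k-1}$ and the vector $\lambda_{k-1}$. The Simplex method for \eqref{eq:ray-intersection-lower-bound-iteration-k} is started from the basis $M^{i*}_{k-1}$, which is feasible because $M^{i*}_{k-1} \subset M^i_k$; pruning preserves this inclusion. We then check that each pivot preserves the invariants.

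1. \emph{Independence.} In the reduced tableau \eqref{eq:reduced-simplex-tableau-iteration-k-initial}, the incoming column has coordinates $\tilde\nu$ in the current basis. The outgoing index $m_{out}$ from \eqref{eq:simplex-iteration-k-outgoing-index} has $\tilde\nu_{m_{out}} > 0$. Replacing a basis vector by a vector with a nonzero coefficient on it keeps the set linearly independent, and the new vectors still lie in $\mathcal{C}$. This gives (iii).

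2. \emph{Feasibility.} The ratio test keeps the updated basic values nonnegative, so $p$ stays in the cone of the new basis.

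3. \emph{Normal vector.} Linear independence makes $Z$ invertible, so $\lambda_k = Z^{-\top}\mathbb{1}$ is well defined. By construction it satisfies $\langle\lambda_k, z_m\rangle = 1$ on the basis. Moreover $\langle\lambda_k, p\rangle = \mathbb{1}^\top Z^{-1}p = \sum_m \nu^*_m = (\beta^l_k)^{-1}$. The normalization step rescales $\lambda_k$ positively, which preserves both properties.

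The main obstacle is showing $\sum_m \nu^*_m > 0$ strictly. The sum is nonnegative, and it cannot be zero because then $p = Z\nu^* = \mathbb{0}$, contradicting $p \neq \mathbb{0}$. This yields $\beta^l_k$ finite and positive, and hence (ii).

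\textbf{Deriving (i).} Let $\beta = \beta^l_k$ and $z = \sum \beta\nu^*_m z_m$. Then $z = \beta p$, and since $\sum \beta\nu^*_m = 1$ with nonnegative weights, $z$ is a convex combination of points of $\mathcal{C}$, so $z \in \mathcal{C}$. Thus $(\beta, z)$ is feasible for \eqref{eq:ray-intersection-definition}.

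The same weights give $z^i \in \mathcal{C}^i$ by convexity. Linearity gives $z = z^1 - z^2 + p$, so Prop.~\ref{prop:ray-intersection-problem} implies that $(1/\beta, z^1, z^2)$ is feasible for \eqref{eq:growth-distance-definition}, with $\alpha = 1/\beta > 0$.
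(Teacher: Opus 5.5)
Your proposal is correct and is essentially the paper's argument made explicit. The paper's proof simply cites the invariance of \eqref{eq:inner-approximation-properties-iteration-k-1} and \eqref{eq:normal-properties-iteration-k-1} under the initialization and every Simplex pivot, then turns the conic coefficients into convex weights via \eqref{eq:conic-combination-to-optimal-solution} and Prop.~\ref{prop:ray-intersection-problem}; your induction does exactly this, and you are right that $\lambda_{k-1}$ in (ii) should read $\lambda_k$. One detail that both you and the paper leave implicit: for the initialization indices $m \le 0$ the points $z^i_m$ must be defined, e.g.\ $z^1_m = p^1 + (r^1/\underline{r})\, z_m \in \mathcal{C}^1$ and $z^2_m = p^2 - (r^2/\underline{r})\, z_m \in \mathcal{C}^2$, so that $z_m = z^1_m - z^2_m + p$ and your convexity step for $z^i$ goes through at every $k$.
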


\begin{proof}
The proof follows from the exposition in Secs.~\ref{subsec:inner-outer-polyhedral-approximations} to \ref{subsec:polyhedral-approximation-update}.
In particular, primal feasibility follows from the invariance of \eqref{eq:inner-approximation-properties-iteration-k-1}, \eqref{eq:conic-combination-to-optimal-solution}, and Prop.~\ref{prop:ray-intersection-problem}.
Dual feasibility follows from the invariance of \eqref{eq:normal-properties-iteration-k-1}, and the Simplex property follows from the invariance of the second condition in \eqref{eq:inner-approximation-properties-iteration-k-1}.
\end{proof}

While a discussion of the asymptotic convergence properties of the growth distance algorithm is outside the scope of this paper, we can state the following monotone convergence result, the proof of which follows from the discussion in Secs.~\ref{subsec:inner-outer-polyhedral-approximations} to \ref{subsec:polyhedral-approximation-update}.
Note that monotone convergence does not imply asymptotic convergence to the optimal solution.

\begin{theorem} (Monotone convergence)
\label{thm:monotone-convergence}
For the growth distance algorithm Alg.~\ref{alg:growth-distance-algorithm}, under the assumptions of Problem~\ref{problem:growth-distance}, the relative gap $\beta^u_k/\beta^l_k - 1$ is monotonically decreasing.
\end{theorem}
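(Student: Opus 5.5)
The plan is to show separately that $\beta^u_k$ is nonincreasing in $k$ and that $\beta^l_k$ is nondecreasing in $k$. Since $\beta^l_k > 0$ for all $k$ by Thm.~\ref{thm:growth-distance-algorithm-invariants}(i), these two facts immediately give that $\beta^u_k/\beta^l_k - 1$ is nonincreasing. The convention $\beta^u_0 = \infty$ is handled trivially: the gap is $\infty$ at $k=0$, and any finite value at $k=1$ is smaller.

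\textbf{Upper bound.} This is immediate from the outer approximation update \eqref{eq:outer-approximation-update-iteration-k}, which gives
\[
\beta^u_k = \min\{\beta^u_{k-1}, v_k/\langle\lambda_{k-1}, p\rangle\} \leq \beta^u_{k-1}.
\]
The normalization of $\lambda_k$ in Alg.~\ref{alg:growth-distance-algorithm}, line~\ref{subalg:lambda-normalization}, does not affect this quantity, by the positive homogeneity of $s_v[\mathcal{C}]$ and the invariance of $s_p[\mathcal{C}]$ to positive scaling.

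\textbf{Lower bound.} I would argue through the LP \eqref{eq:ray-intersection-lower-bound-iteration-k} and the Simplex warm start. At iteration $k$, the Simplex method is initialized with the basis $M^{i*}_{k-1}$. This basis is primal feasible for the new LP, because its columns are still present and $p \in \cone\{z_m : m \in M^{i*}_{k-1}\}$ by invariant \eqref{eq:inner-approximation-properties-iteration-k-1}. Its cost equals $(\beta^l_{k-1})^{-1}$, since the basic solution $\nu^*$ is unchanged.

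Each Simplex pivot does not increase the cost. Hence the terminal cost satisfies $(\beta^l_k)^{-1} \leq (\beta^l_{k-1})^{-1}$, that is, $\beta^l_k \geq \beta^l_{k-1}$. This holds even if the pivot is degenerate, because the cost is then merely unchanged.

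An alternative route that avoids Simplex mechanics is to note that the LP optimum at iteration $k$ is at most the objective of any feasible point. The previous optimal $\nu^*$, extended by zeros, is such a point.

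\textbf{Main obstacle: pruning.} The one subtlety is the pruning step of Rem.~\ref{rem:inner-approximation-pruning}, which removes indices from $M^i_k$. Naively this could shrink $\mathcal{P}^i$ and lower $\beta^l$. However, pruning always retains $M^{i*}_k \subset M^i_k$, and the warm start at the next iteration uses only $M^{i*}_{k}$. So feasibility of the previous optimal solution in the next LP, which is all the argument above requires, is preserved.

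I expect this to be the point needing care. The claim must be phrased in terms of the previous optimal basis remaining feasible, rather than in terms of $\mathcal{P}^i_{k-1} \subset \mathcal{P}^i_k$, which can fail after pruning. With both monotonicities established, the relative gap is monotonically (non-strictly) decreasing.
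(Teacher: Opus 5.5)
Your proposal is correct and follows essentially the same route as the paper. The paper's proof is just a pointer to the discussion in Secs.~\ref{subsec:inner-outer-polyhedral-approximations} to \ref{subsec:polyhedral-approximation-update}, and that discussion gives exactly your two steps. First, $\beta^u_k$ is nonincreasing by the $\min$ update \eqref{eq:outer-approximation-update-iteration-k}. Second, $\beta^l_k$ is nondecreasing because the Simplex method is warm-started from the still-feasible basis $M^{i*}_{k-1}$ and never increases the cost. Two points you make explicit are left implicit in the paper, and both are accurate: pruning preserves $M^{i*}_{k-1} \subset M^i_k$, so the argument must go through the previous optimal point rather than $\mathcal{P}^i_{k-1} \subset \mathcal{P}^i_k$; and ``monotonically decreasing'' should be read as non-strict, since degenerate pivots can leave the gap unchanged.
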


\subsubsection{Numerical Robustness}
\label{subsubsec:numerical-robustness}

A few computational steps in the growth distance algorithm, Alg.~\ref{alg:growth-distance-algorithm}, require careful attention to avoid numerical issues arising from floating-point errors.
In particular, we describe a robust method for the inner approximation update step (see Sec.~\ref{subsubsec:inner-approximation-update}).

Each iteration of the Simplex method requires the computation of the conic coefficients $(\tilde{\nu}_m)_{m \in M^{i*}_k} = Z^{-1}z_k$ and $(\nu^*_m)_{m \in M^{i*}_k} = Z^{-1}p$, where $Z$ is defined in \eqref{eq:simplex-method-iteration-k-normal}, to compute the outgoing index in \eqref{eq:simplex-iteration-k-outgoing-index}.
When the algorithm is executed with a small tolerance value, $\epsilon_{tol}$, the matrix $Z$ can become ill-conditioned, thereby affecting solution accuracy.
Instead of directly evaluating the conic coefficients, we project $z_k$, $p$, and the columns of $Z$ onto the subspace orthogonal to $p$ and compute the barycentric coordinates for use in \eqref{eq:simplex-iteration-k-outgoing-index}.
In other words, we compute the barycentric coordinates
\begin{equation*}
\tilde{\mu}_m = \frac{\tilde{\nu}_m}{\sum_{n \in M^{i*}_k}\tilde{\nu}_n}, \quad \mu^*_m = \frac{\nu^*_m}{\sum_{n \in M^{i*}_k}\nu^*_n},
\end{equation*}
for $m \in M^{i*}_k$.
Note that scaling $(\tilde{\nu}_m)_{m \in M^{i*}_k}$ and $(\nu^*_m)_{m \in M^{i*}_k}$ does not affect the computation of the outgoing index in \eqref{eq:simplex-iteration-k-outgoing-index}.
Then, the barycentric coordinates are robustly computed using a signed-volume method, as described in \cite{montanari2017improving}.

\subsubsection{Warm Start}
\label{subsubsec:warm-start}

When the convex sets $\mathcal{C}^1$ and $\mathcal{C}^2$ are in motion, the algorithm state from a previous growth distance evaluation can be used to warm start the algorithm.

Let $T_g(\mathcal{C}^1)$ represent $\mathcal{C}^1$ after a small displacement, where $g = (\delta p, \delta R) \in SE(3)$, and let, without loss of generality, $\mathcal{C}^2$ be stationary.
Let $\{z^1_m\}_{m \in M^i} \subset \mathcal{C}^1$ and $\{z^2_m\}_{m \in M^i} \subset \mathcal{C}^2$ be the support points corresponding to the optimal inner approximation from the previous growth distance evaluation (see \eqref{eq:primal-optimal-from-ray-intersection}).
Then, we can define the inner polyhedral approximation after the small displacement $g = (\delta p, \delta R)$ as
\begin{equation}
\label{eq:warm-start-inner-approximation}
    \mathcal{P}^i_{ws} := \{\delta Rz^1_m + \delta p - z^2_m\}_{M^i} \subset T_g(\mathcal{C}^1) - \mathcal{C}^2.
\end{equation}
So, $\mathcal{P}^i_{ws}$ can be used to determine an inner approximation for the Minkowski difference set after the displacement $g$ (without using the support functions).
To warm start the algorithm, we use the points in $\mathcal{P}^i_{ws}$ to update the initial inner approximation $\mathcal{P}^i_0$.
This step, carried out after Alg.~\ref{alg:growth-distance-algorithm}, line~\ref{subalg:growth-distance-algorithm-init}, updates the inner approximation $\mathcal{P}^i_0$ using the points in $\mathcal{P}^i_{ws}$ instead of computing the support points of $T_g(\mathcal{C}^1)$ and $\mathcal{C}^2$, using the same update method described in Sec.~\ref{subsubsec:inner-approximation-update}.
The algorithm is then continued using the updated inner approximation $\mathcal{P}^i_0$.
Since the iterative update steps for warm start are the same as those for cold start, the convergence results in Sec.~\ref{subsubsec:growth-distance-algorithm-properties} still hold.

\subsubsection{Boolean Collision Detection}
\label{subsubsec:boolean-collision-detection}

Since the growth distance algorithm maintains upper and lower bounds $\beta^u$ and $\beta^l$ for $\beta^*$, we can terminate the algorithm early when only collision detection is needed.
If $\beta^l_k \geq 1$ at iteration $k$, then we know that $\alpha^* = 1/\beta^* \leq 1/\beta^l_k \leq 1$.
In this case, the convex sets are in collision.
Likewise, the convex sets are separated if $\beta^u_k < 1$ at iteration $k$.
A crucial advantage of our method is that, since the normal vector achieving the upper bound $\beta^u_k$ is continually tracked, we can compute a separating hyperplane between the convex sets when they are separated.
Similarly, when the convex sets are in collision, a point in their intersection can be computed.
Additionally, the algorithm state after a collision detection evaluation can be used for growth distance warm start, and vice versa.

%% file: sections/results.tex
\section{Results}
\label{sec:results}

In this section, we present benchmark results for the growth distance algorithm and compare them with other growth distance methods.
Finally, we illustrate robotics applications of our method.

\subsection{Benchmark Setup}
\label{subsec:benchmark-setup}

The growth distance algorithm, Alg.~\ref{alg:growth-distance-algorithm}, and the warm start and collision detection functionalities described in Sec.~\ref{subsec:algorithm-properties-and-extensions} are implemented in C\texttt{++} using the \textit{Eigen3} library \cite{eigenweb} and compiled using the \texttt{-O3} flag.
The benchmarks are run on a laptop using \textit{Ubuntu 20.04} with an \textit{Intel Core i7-10870H} CPU ($\qty{2.20}{GHz}$).
We set the maximum number of iterations $K_{max} = 100$ and the tolerance $\epsilon_{tol} \approx 1.49 \times 10^{-8}$ (square root of the machine epsilon) for the algorithm.
The growth distance library and the benchmark code can be found in the repositories\footnote{
Growth distance library: \url{https://github.com/HybridRobotics/differentiable-growth-distance} \\
Benchmark code: \url{https://github.com/AkshayThiru/dgd-benchmark}
}.

Our algorithm supports all convex sets (that satisfy the assumptions of Problem~\ref{problem:growth-distance}) with a support function implementation, including convex primitives, mesh sets, and DSFs (see Tab.~\ref{tab:comparison-growth-distance-methods}).
Sets with curved surfaces often require more iterations for convergence, and so we provide separate benchmarks for curved convex primitives and for meshes.
The geometric parameters of all primitive sets are varied over two orders of magnitude (for example, for a cylinder, the radius is varied from $\qty{0.25e-2}{m}$ to $\qty{0.25}{m}$).
So, our benchmark results include highly skewed objects such as needles and sheets.
For the mesh benchmark, we use $27$ different objects from the YCB dataset \cite{calli2017yale}.
The mesh vertices are convexified using Qhull \cite{barber1996quickhull}, and the number of mesh convex hull vertices ranges from $241$ to $4516$.
The mesh support functions are computed using the hill-climbing algorithm~\cite{van2003collision}.

We chose $1000$ random pairs of convex sets (with random geometric parameters) for each benchmark case, and for each pair we sample $100$ random poses.
For a given pair of convex sets and poses, the computation times are averaged over $100$ function calls.
Thus, the statistics for each benchmark case (corresponding to one bar graph in Fig.~\ref{fig:growth-distance-benchmark}) are computed over $10^7$ function calls.
Note that, unlike for minimum distance algorithms such as GJK (see \cite{montaut2024gjk}), the growth distance computation accuracy does not depend on the minimum distance between the convex sets.
For the warm start benchmarks, we randomly displace the convex sets so that the translation and rotation approximately correspond to $1\%$ of each set's size.

We compare our method with the \emph{Inc}, \emph{IE}, \emph{DCol}, and \emph{DCF} algorithms (see Tab.~\ref{tab:comparison-growth-distance-methods}).
We implemented optimized versions of the Incremental (\emph{Inc}) algorithm \cite{ong2000fast} and the Internal Expanding (\emph{IE}) algorithm \cite{zheng2010fast} in C\texttt{++}.
For a fair comparison, most of the optimizations used in the original C implementation of \emph{Inc} were preserved, except for the memory-intensive ones.
Similarly, our \emph{IE} algorithm implementation uses the same optimizations and convergence criteria as our method.
We note that not all methods work for all convex set types and that not all methods have warm-start functionality (see Tab.~\ref{tab:comparison-growth-distance-methods}).

\subsection{Benchmark Results}
\label{subsec:benchmark-results}

\begin{figure}[!t]
\centering
\subfloat[
Growth distance benchmark results for curved primitives and mesh sets.
]{%
\begin{minipage}{\columnwidth}
    \centering
    \includegraphics[width=0.95\columnwidth]{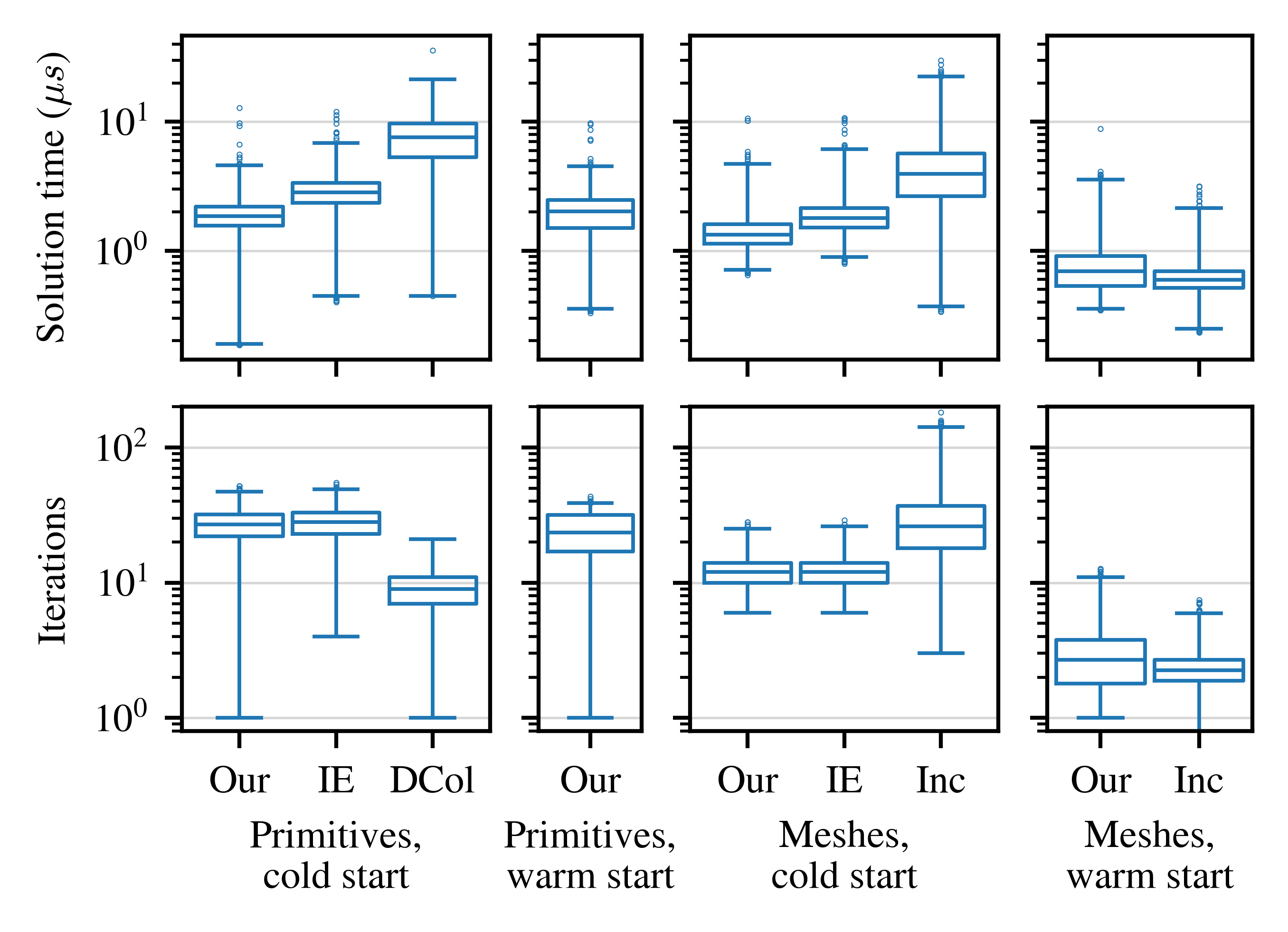}%
    \label{subfig:benchmark-primitives-meshes}
\end{minipage}%
}
\hfil
\subfloat[
Growth distance benchmark results for DSFs in cold start.
]{%
\begin{minipage}{\columnwidth}
    \centering
    \includegraphics[width=0.95\columnwidth]{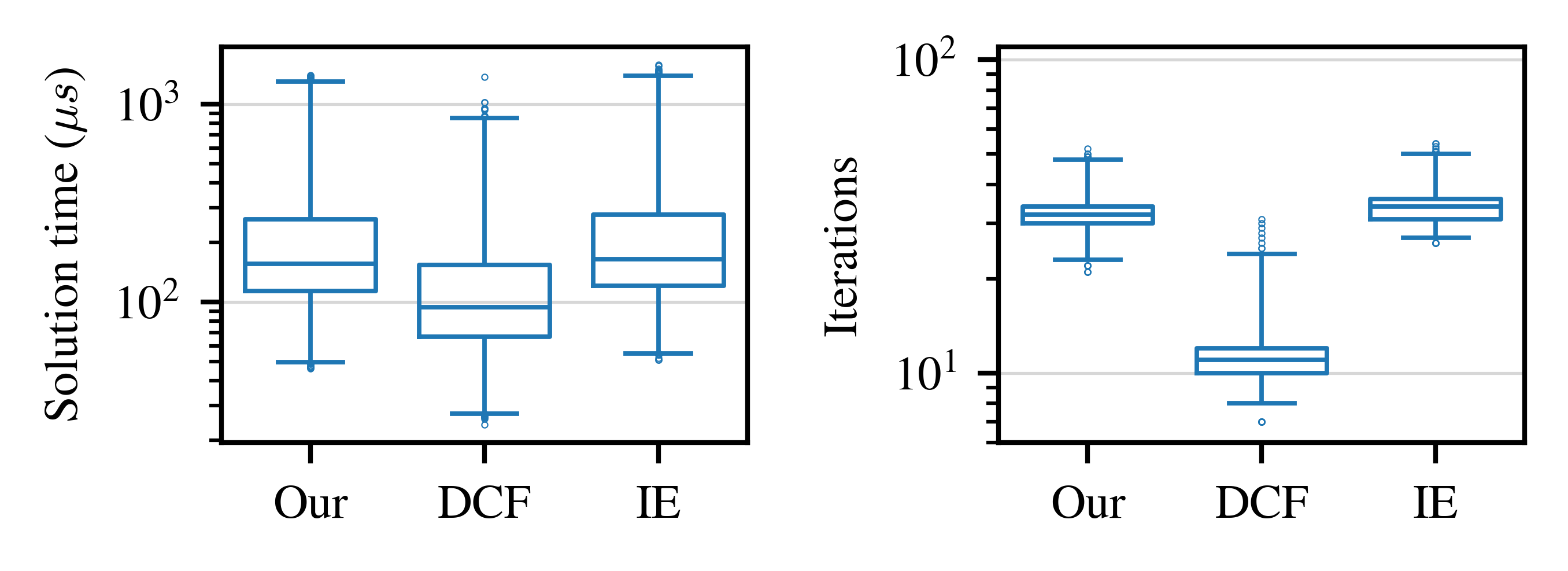}%
    \label{subfig:benchmark-dsfs}
\end{minipage}%
}
\caption{
Growth distance benchmark results for various types of convex sets, with cold and warm starts, across different methods.
For each bar plot, the bars correspond to the $25$-th, $50$-th, and $75$-th percentile values, and the whiskers show the $0.01$ and $99.99$ percentile values.
The outliers are shown as dots.
Our algorithm achieves state-of-the-art solution times for curved primitives and mesh sets (in cold start) across all existing methods.
For DSFs, the hybrid Newton solver DCF proposed in \cite{lee2023uncertain} outperforms our algorithm.
This is primarily because the DCF method requires fewer iterations to converge.
}
\label{fig:growth-distance-benchmark}
\end{figure}


\begin{table*}[!t]
\footnotesize
\setlength\extrarowheight{2pt}
\caption{Solution Times for Growth Distance Computation}
\label{tab:solution-times-benchmark}
\begin{tabularx}{\textwidth}{>{\raggedright\arraybackslash}p{2.5cm}>{\raggedright\arraybackslash}p{0.5cm}CCCCC}
\toprule
\multicolumn{2}{c}{Benchmark case} & \multicolumn{5}{c}{Solution times: (median, $99.99$-th percentile) in microseconds (`$-$' means unsupported)} \\
\cline{3-7}
\multicolumn{2}{c}{(cs: cold start, ws: warm start)} & \emph{IE} \cite{zheng2010fast} & \emph{DCF} \cite{lee2023uncertain} & \emph{Inc} \cite{ong2000fast} & \emph{DCol} \cite{tracy2023differentiable} & \textbf{Our method} \\
\midrule
\multirow{2}{2.5cm}{Curved primitives} & cs & $(2.83, 6.86)$ & $-$ & $-$ & $(7.61, 21.4)$ & $\bm{(1.86, 4.60)}$ \\
& ws & $-$ & $-$ & $-$ & $-$ & $\bm{(2.01, 4.52)}$ \\
\addlinespace[5pt]
\multirow{2}{2.5cm}{Meshes} & cs & $(1.79, 6.15)$ & $-$ & $(3.93, 22.4)$ & $-$ & $\bm{(1.33, 4.70)}$ \\
& ws & $-$ & $-$ & $\bm{(0.60, 2.14)}$ & $-$ & $(0.69, 3.57)$ \\
\addlinespace[5pt]
DSFs \cite{lee2023uncertain} & cs & $(165, 1390)$ & $\bm{(94.4, 848)}$ & $-$ & $-$ & $(156, 1300)$ \\
\bottomrule
\end{tabularx}
\end{table*}

The benchmark results for curved primitive sets and meshes are shown in Fig.~\ref{subfig:benchmark-primitives-meshes}, and for DSFs in Fig.~\ref{subfig:benchmark-dsfs}.
The median and $99.99$-th percentile values of the solution times are tabulated in Tab.~\ref{tab:solution-times-benchmark}.
We observe that across all convex sets (except DSFs), our algorithm outperforms all other methods in cold-start initialization.
For mesh sets in warm-start initialization, our algorithm achieves performance comparable to the \emph{Inc} algorithm.
Compared to the \emph{DCol} algorithm, the per-iteration computation time compensates for the larger number of iterations.
The performance advantage over the \emph{IE} algorithm is more significant for primitive sets than for meshes and DSFs.
This observation can be explained by our algorithm's use of a more computationally efficient Simplex update step.
The \emph{DCF} algorithm uses a hybrid Newton method, resulting in convergence in fewer iterations and, thus, faster solution times.
However, approximating meshes using DSFs (as is done in some \emph{DCF} applications, see \cite{lee2023uncertain}) results in solution times roughly two orders of magnitude larger.
Thus, the \emph{DCF} algorithm is more feasible for sets that can be directly represented as DSFs.

For curved primitive sets, warm-start initialization provides negligible performance benefits over cold-start initialization.
This is because, for curved primitive sets, the support function computations are inexpensive and the transformed inner approximation vertices obtained using warm start (see \eqref{eq:warm-start-inner-approximation}) generally don't lie on the Minkowski difference set boundary.
On the other hand, significant benefits are obtained for meshes due to the flat nature of the boundary of the Minkowski difference set.
We note that the primal warm start technique described in Sec.~\ref{subsubsec:warm-start} is only for the inner polyhedral approximation.
A similar dual warm start technique for the outer polyhedral approximation yielded results comparable to the primal warm start (results for the dual warm start are omitted).

\begin{figure}[!t]
\centering
\includegraphics[width=0.95\columnwidth]{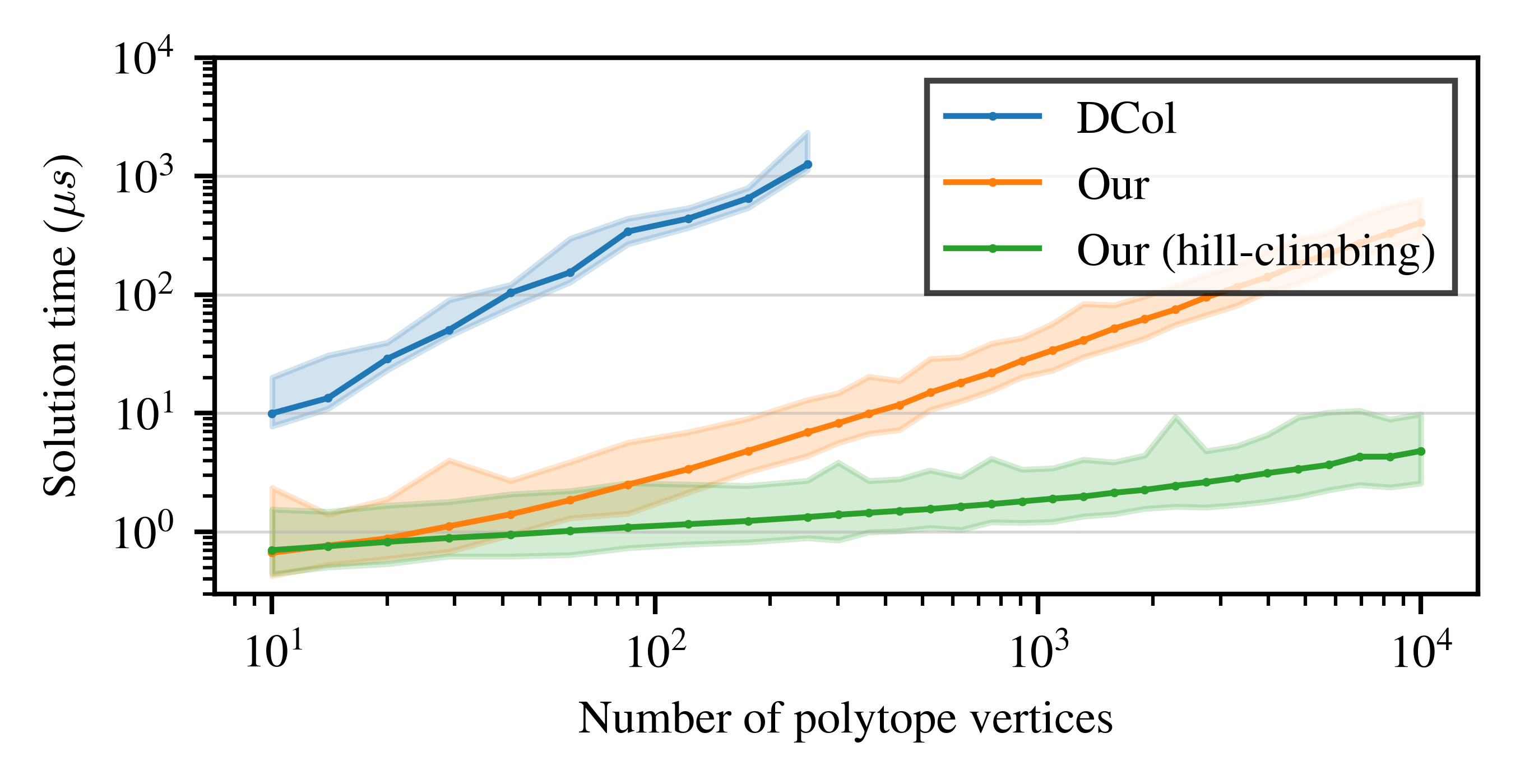}%
\caption{
Solution times for growth distance computation using the interior-point method \emph{DCol}~\cite{tracy2023differentiable} and our method, for polytopes with different numbers of vertices.
For a given number of polytope vertices, the vertices are randomly generated from an ellipsoid surface with an aspect ratio equal to $10$.
The lines show the median solution times, and the shaded regions correspond to the minimum and maximum values.
As the number of polytope vertices increases, the solution time for \emph{DCol} (in blue) increases drastically.
On the other hand, our method demonstrates (roughly) linear growth in solution time (in orange).
When using the hill-climbing algorithm~\cite{van2003collision} to accelerate support function computation, we achieve sublinear growth in solution time (in green).
}
\label{fig:growth-distance-dcol-polytopes}
\end{figure}

Fig.~\ref{fig:growth-distance-dcol-polytopes} compares the solution time of our method (with and without the hill-climbing algorithm~\cite{van2003collision}) with the optimization-based method \emph{DCol} for computing the growth distance between polytopic sets.
We observe that the computation time for \emph{DCol} increases much faster with the number of vertices.
This is because optimization-based methods require solving KKT systems, which grow linearly with the number of constraints and thus result in polynomial computational complexity.
On the other hand, support function-based methods (such as our method and \emph{IE}) yield linear computational growth.
Furthermore, accelerating the support function computation using hill-climbing results in faster computation times.

\begin{figure}[!t]
\centering
\subfloat[
Convergence rate for curved primitive sets.
]{%
\begin{minipage}{\columnwidth}
    \centering
    \includegraphics[width=0.95\columnwidth]{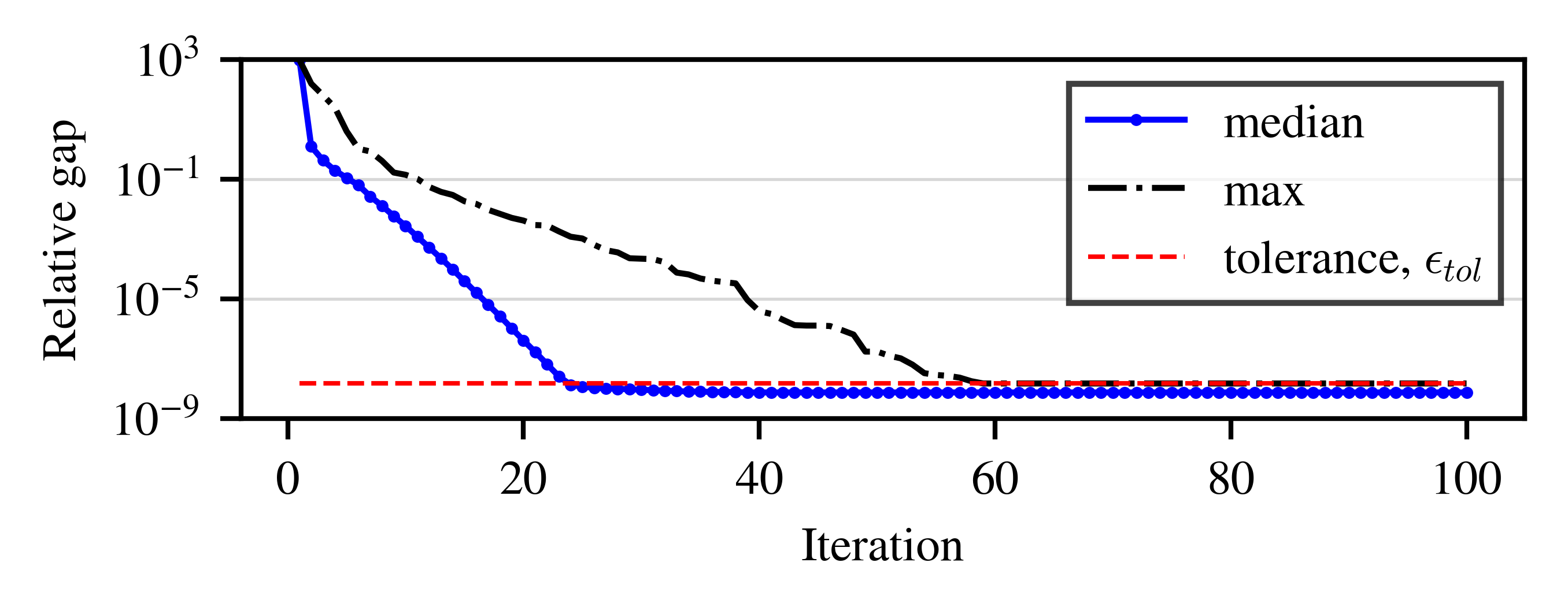}%
    \label{subfig:convergence-rate-primitives}
\end{minipage}%
}
\hfil
\subfloat[
Convergence rate for mesh sets.
]{%
\begin{minipage}{\columnwidth}
    \centering
    \includegraphics[width=0.95\columnwidth]{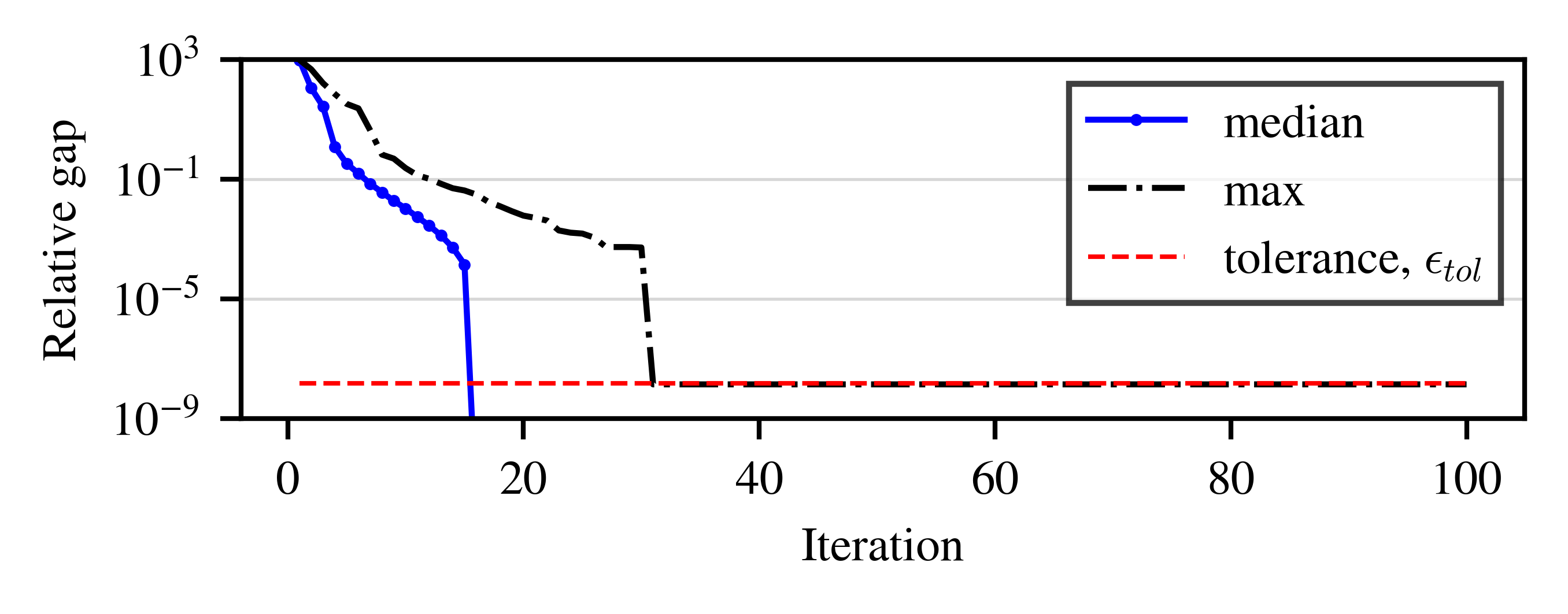}%
    \label{subfig:convergence-rate-meshes}
\end{minipage}%
}
\caption{
Convergence rate results for the growth distance algorithm, Alg.~\ref{alg:growth-distance-algorithm}, for $10^7$ function calls.
The geometric parameters for the primitive sets are varied over two orders of magnitude.
The mesh sets are generated identically to the method described in Fig.~\ref{fig:growth-distance-dcol-polytopes}.
Our method achieves a linear worst-case convergence rate for primitive sets and finite convergence for mesh sets.
}
\label{fig:convergence-rate-benchmark}
\end{figure}

We also show the convergence rate (the log of the relative gap $\beta^u_k/\beta^l_k - 1$ versus the iteration $k$) for the growth distance algorithm, Alg.~\ref{alg:growth-distance-algorithm}, in Fig.~\ref{fig:convergence-rate-benchmark}.
Empirically, we observe that the algorithm converges linearly for primitive sets and converges in a finite number of iterations for mesh sets.

Finally, we summarize some other statistics for our algorithm.
For two-dimensional sets, our algorithm achieves, on average, a solution time of $\qty{0.478}{\mu s}$ with $9.67$ iterations.
Across all growth distance computations for our algorithm in the benchmark (roughly $10^9$ function calls including multiple reruns of the benchmarks), optimality is achieved in all of them and the maximum primal infeasibility error (the error in satisfying the constraint \eqref{subeq:growth-distance-definition-intersection}) is $\qty{6.12e-10}{m}$ for curved primitive sets and $\qty{1.97e-12}{m}$ for meshes.
For the \emph{IE} algorithm, the maximum primal infeasibility error is $\qty{5.95e-7}{m}$ for curved primitive sets and $\qty{1.89e-11}{m}$ for meshes, indicating that our algorithm yields smaller primal infeasibility errors.
Our algorithm achieves a smaller primal infeasibility error compared to the \emph{IE} algorithm due to the numerically robust barycentric coordinate computation method described in Sec.~\ref{subsubsec:numerical-robustness}.

\subsection{Applications}
\label{subsec:applications}

In this subsection, we briefly describe some robotics applications of the growth distance algorithm and discuss its advantages and disadvantages compared to signed distance methods.
We note that the goal of this subsection is to demonstrate the suitability of our method for a range of robotics applications.
An in-depth exploration of the merits of the growth distance relative to the signed distance is left for future work.

\subsubsection{Rigid Body Simulation (Fig.\texorpdfstring{~\ref{fig:physics-simulation}}{ 6})}
\label{subsubsec:rigid-body-simulation}


\begin{figure*}[!t]
\centering
\includegraphics[width=0.99\textwidth]{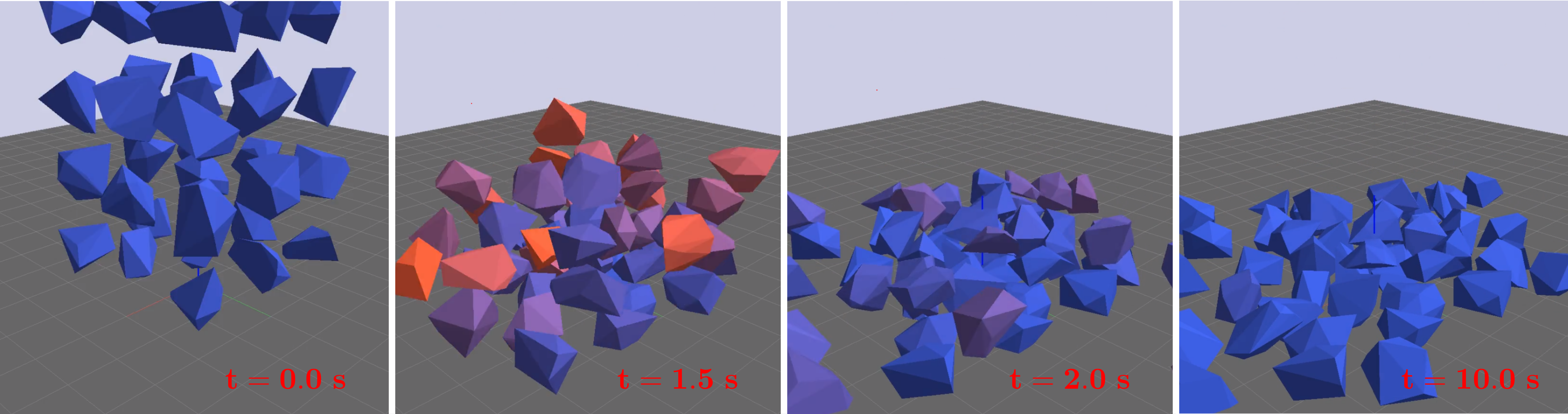}%
\caption{
Multi-rigid-body simulation of $50$ randomly generated polytopes falling on the ground, with our growth distance implementation used as the collision detection and contact frame computation backend in the Bullet Physics SDK.
The polytopes with larger speeds are depicted in red, and the stationary polytopes in blue.
The total kinetic energy of the system reaches zero at the end of $\qty{10}{s}$.
The maximum penetration depth across all collisions is $\qty{1.39}{cm}$, which is similar to the maximum penetration depth achieved using the default signed distance-based collision detector.
}
\label{fig:physics-simulation}
\end{figure*}

In multi-rigid-body simulations, computing contact impulses requires detecting collisions between rigid bodies and computing the contact points and normals.
We consider an example of simulating $50$ polytopes falling to the ground.
We use the Bullet Physics SDK~\cite{coumans2015bullet} (the C\texttt{++} engine) and replace the built-in signed distance-based collision detector with our growth distance implementation.
Following the Bullet recommendations, we set the polytope diameters to roughly $\qty{1}{m}$, with $8$-$20$ vertices per polytope.
We set the collision margins of the polytopes to $\qty{4}{cm}$ (the default value) and ensure their inertias are roughly equal to prevent excessive penetration.
We run the simulation at $\qty{240}{Hz}$ for $\qty{10}{s}$.

Snapshots of the simulation state are depicted in Fig.~\ref{fig:physics-simulation}.
The total kinetic energy of the system reaches $\qty{0.00}{J}$ by $\qty{10}{s}$, with $90.19\%$ of the total initial potential energy dissipated.
The growth distance function is called $7.87 \times 10^5$ times during the $\qty{10}{s}$ duration.
We note that simulating polytopic collisions with the Bullet Physics SDK often results in a positive penetration depth during contact.
The default signed distance-based collision detector and our growth distance implementation result in similar maximum penetration depths of $\qty{1.50}{cm}$ and $\qty{1.39}{cm}$, respectively, across all the collisions (polytope-polytope and polytope-ground).

A disadvantage of using the growth distance for rigid-body simulations is that, unlike the signed distance, the growth distance depends on the center points of the convex sets.
Thus, when the convex sets are strictly penetrating or strictly separated, the contact points and normals differ between the two methods, resulting in different final simulation states.
However, when in contact (i.e., when the growth distance equals one), the contact points and normals for both methods match.
In such cases, minimum distance algorithms such as GJK can suffer from numerical accuracy issues~\cite{montaut2024gjk}, whereas the numerical accuracy of our growth distance algorithm is contact-invariant.
Therefore, the growth distance algorithm can potentially be more numerically stable for rigid-body simulations.

\subsubsection{Motion Planning (Fig.\texorpdfstring{~\ref{fig:manipulator-motion-planning}}{ 7})}
\label{subsubsec:motion-planning}


\begin{figure*}[!t]
\centering
\includegraphics[width=0.99\textwidth]{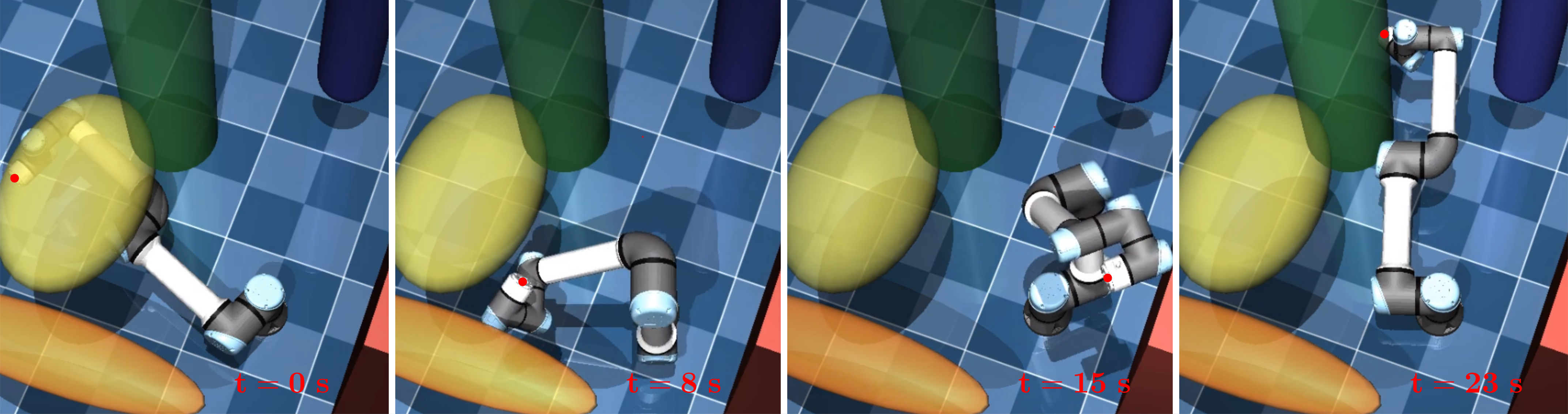}%
\caption{
Motion planning for a manipulator robot in a workspace with five tightly-placed obstacles.
The red dots indicate the end-effector position.
The robot's state validity during motion planning is verified using our growth distance implementation.
}
\label{fig:manipulator-motion-planning}
\end{figure*}

The collision detection functionality described in Sec.~\ref{subsubsec:boolean-collision-detection} can be used for motion planning applications, where it is sufficient to determine whether a given robot state is safe.
We consider motion planning for a UR5e manipulator robot using the RRT implementation in the Open Motion Planning Library (OMPL) \cite{sucan2012open}.
We use the MuJoCo simulator~\cite{todorov2012mujoco} for impedance control of the manipulator joint positions, the UR5e model from MuJoCo Menagerie~\cite{menagerie2022github}, and the Pinocchio library~\cite{carpentier2019pinocchio} for forward kinematics.
The motion planning task is to reach a goal state from an initial state within a workspace with tightly-placed obstacles.
We set the state space resolution for checking state validity to $0.1\%$.

The resulting robot trajectory is shown in Fig.~\ref{fig:manipulator-motion-planning}.
The motion planning problem is solved in $\qty{0.107}{s}$, with $6.03 \times 10^5$ collision detection function calls, and the resulting joint position trajectory is tracked by the position controller in $\qty{23}{s}$.
We note the advantage of using early termination in the growth distance algorithm, which allows us to detect upwards of $5.64 \times 10^6$ collisions per second with cold start.

In robotics applications, there is often significant penetration between geometries during the motion planning and trajectory optimization solution process.
A substantial advantage of using the growth distance for such applications is that it provides a unified measure of set separation and intersection.
This is unlike the signed distance, which is obtained as the solution to a nonconvex optimization problem in the case of set intersection.
Additionally, the computability of the growth distance using a convex optimization problem enables differentiability analysis, a topic that will be explored in depth in future works.

%% file: sections/conclusions.tex
\section{Conclusions}
\label{sec:conclusions}

In this paper, we presented an algorithm to compute the growth distance between convex sets.
We defined the growth distance problem for proper convex sets and stated its equivalent ray intersection problem.
Our algorithm computes the ray intersection solution by maintaining inner and outer polyhedral approximations of the Minkowski difference set.
The inner approximation is updated by incrementally adding support points and using the Simplex method to compute primal feasible solutions.
We also proposed warm start and collision detection functionalities for our algorithm.

Benchmark results show that our algorithm achieves state-of-the-art performance for primitive sets and meshes in cold start, while maintaining low primal infeasibility error.
The improved performance in computing growth distance obtained with our algorithm translates into faster, more accurate collision detection in robotics applications.
Future work on our algorithm will focus on convergence results, algorithmic details for dimensions greater than three, and second-order methods.
Additionally, the differentiability properties of the optimal value and the optimal solution to the growth distance problem will be considered.